\newcommand{\method}{UniQuan\xspace}
\newcommand{\methodf}{UniQuan$_F$\xspace}
\newcommand{\methodfz}{UniQuan$_F$-0\xspace}
\newcommand{\methodfullbold}{\textbf{\underline{Uni}}fied \textbf{\underline{Quan}}tization\xspace}
\newcommand{\methodffullbold}{\textbf{\underline{Uni}}fied \textbf{\underline{Quan}}tization with \textbf{\underline{F}}lexible Mapping\xspace}
\newcommand{\naivef}{\methodfz}
\newcommand{\alternating}{\textsc{Alternating}\xspace}
\newcommand{\alternatings}{\textsc{Alternating}$^*$\xspace}
\newcommand{\blue}[1]{{\color{blue} #1}}
\newtheorem{theorem}{Theorem}
\def\trans{{\mathcal{T}}}
\def\utrans{{\mathcal{T}}_U}
\def\ftrans{{\mathcal{T}_F}}
\def\otrans{{\mathcal{T}_O}}
\def\btrans{{\mathcal{T}_B}}
\def\detrans{{\mathcal{D}}}
\def\fdetrans{{\mathcal{D}}_F}
\def\odetrans{{\mathcal{D}_O}}
\def\bdetrans{{\mathcal{D}_B}}
\def\udetrans{{\mathcal{D}_U}}
\def\map{{\mathcal{M}}}
\def\umap{{\mathcal{M}_U}}
\def\gmap{{\mathcal{M}_B}}
\def\lpgmap{{\mathcal{M}^*_B}}
\def\recon{{\mathcal{R}_B}}
\setlist[itemize]{topsep=-1mm, itemsep=0mm}
\def\vc{{\bm{c}}}
\def\vd{{\bm{d}}}
\def\vq{{\bm{q}}}
\def\vs{{\bm{s}}}
\def\vw{{\bm{w}}}
\def\valpha{{\bm{\alpha}}}
\def\mC{{\bm{C}}}
\def\mX{{\bm{X}}}
\DeclareMathAlphabet{\mathsfit}{\encodingdefault}{\sfdefault}{m}{sl}
\SetMathAlphabet{\mathsfit}{bold}{\encodingdefault}{\sfdefault}{bx}{n}
\def\sD{{\mathbb{D}}}
\def\sR{{\mathbb{R}}}
\DeclareMathOperator*{\argmin}{arg\,min}
\let\llncssubparagraph\subparagraph
\let\subparagraph\paragraph
\let\subparagraph\llncssubparagraph
\newcommand{\smallsection}[1]{\vspace{1mm}\noindent\smash{\textbf{#1.}}}
\title{Unifying Uniform and Binary-coding Quantization
\\ for Accurate Compression of Large Language Models}
\author{
  Seungcheol Park$^1$,
  Jeongin Bae$^2$,
  Beomseok Kwon$^2$,
  Minjun Kim$^1$, \\
  \bf{Byeongwook Kim$^2$,
  Se Jung Kwon$^2$,
  U Kang$^1$\footnote[1]{Corresponding author},
  Dongsoo Lee$^2$} \\
  Seoul National University$^1$ $\quad$
  NAVER Cloud$^2$ \\
  \{ant6si, minjun.kim, ukang\}@snu.ac.kr
}
\begin{document}
\maketitle
\begin{abstract}
How can we quantize large language models while preserving accuracy?
Quantization is essential for deploying large language models (LLMs) efficiently.
Binary-coding quantization (BCQ) and uniform quantization (UQ) are promising quantization schemes that have strong expressiveness and optimizability, respectively.
However, neither scheme leverages both advantages.
In this paper, we propose \textbf{\methodf} (\methodffullbold), an accurate quantization method for LLMs.
\methodf harnesses both strong expressiveness and optimizability by unifying the flexible mapping technique in UQ and BCQ's non-uniform quantization levels.
We propose unified initialization, and local and periodic mapping techniques to optimize the parameters in \methodf precisely.
After optimization, our unification theorem removes computational and memory overhead, allowing us to utilize the superior accuracy of \methodf without extra deployment costs induced by the unification.
Experimental results demonstrate that \methodf outperforms existing UQ and BCQ methods, achieving up to 4.60\% higher accuracy on GSM8K benchmark. 
\end{abstract}
\renewcommand{\thefootnote}{\fnsymbol{footnote}}
% \footnotetext[1]{\blue{Work done during an internship at NAVER Cloud.}}
\footnotetext[1]{Corresponding author}
\renewcommand{\thefootnote}{\arabic{footnote}}  % Restore default numbering

\section{Introduction}
\label{sec:intro}
%\vspace{-1mm}
How can we compress large language models without compromising accuracy?
Reducing the size of large language models (LLMs)~\citep{gpt3,llama2,llama3} is crucial for deploying them in real-world applications since they require expensive computational and memory costs.
Quantization algorithms~\citep{xu, alphatuning, llmint8,smoothquant,awq} efficiently compress LLMs via bit-width reduction of weights by encoding them with a small set of values, namely, quantization levels.
% Quantization~\citep{xu,alphatuning,llmint8,smoothquant,flexround,awq} is a core technique to compress LLMs by reducing the number of bits needed to represent their weights.
% It reduces the bit count by representing the model's weights as a smaller set of values, namely, quantization levels.
% \green{
Each quantization scheme has its own quantization parameters that determine the quantization levels; accurately optimizing these parameters is important because it guarantees that the quantization levels are well aligned with the model's weight distribution.
% Each quantization scheme has its own quantization parameters that determine the quantization levels, and it is important to accurately optimize the quantization parameters to align the quantization levels well with the distribution of the model's weights.
% }

Uniform quantization (UQ)~\cite{flexround,awq,omni} and binary-coding quantization (BCQ)~\cite{xu,alphatuning} are promising quantization schemes that ensure fast inference of quantized LLMs.
While UQ divides its quantization levels uniformly, BCQ results in non-uniform quantization levels through the addition and subtraction of values assigned per each bit.
Research on quantizing LLMs with UQ is actively ongoing, but no study has yet explored BCQ on LLMs as fast acceleration kernels~\cite{lut_gemm,shiftadd} that support BCQ scheme have been released recently.
%

% ============= Figure 1 (Crown jewel) ==================
%==================================================================
%                 Figure 1. Crown_jewel (v2)
%==================================================================
\begin{figure*}[t]
    \centering
    \includegraphics[width=1.0\textwidth]{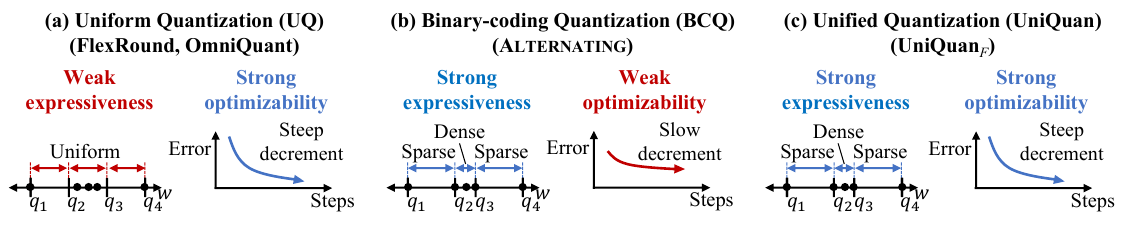}
    % \vspace{-6mm}
    % \renewcommand{\arraystretch}{0.97}
    \caption{
    A comparison of (a) uniform, (b) binary-coding, and (c) unified quantization schemes.
    UQ has strong optimizability swiftly reducing errors during optimization, and BCQ has strong expressiveness adapting its non-linear quantization levels ($q$, bar) according to the distribution of weights ($w$, circle).
    \method combines the advantages of both schemes by unifying UQ and BCQ.
    }
    \label{fig:crown}
    \vspace{-2mm}
\end{figure*}

%==================================================================
%                 Figure 1. Crown_jewel (v1)
%==================================================================

% \begin{figure*}[t]
%     \centering
%     \includegraphics[width=1.0\textwidth]{figures/crown_jewel.pdf}
%     \vspace{-9mm}
%     \renewcommand{\arraystretch}{0.97}
%     \caption{
%     \blue{
%     (a) Three desired properties of quantization methods for accurately mapping weights (circles) to quantization levels (bars).
%     (b) \method satisfies all desired properties by unifying the strengths of the state-of-the-art UQ and BCQ methods.
%     }
%     }
%     % \blue{
%     % Comparison of quantization processes of RTN, FlexRound, Alternating, and \method.
%     % \method leverages the useful optimization technique of UQ (flexible mapping) and the powerful expressive capability of BCQ (adaptive quantization levels), resulting in effective quantization.
%     % }
%     \label{fig:crown}
% \end{figure*} 
% ============= Figure 1 (Crown jewel ends) ==================
% \green{
We compare the expressiveness and optimizability of quantization schemes for LLMs in Figure~\ref{fig:crown}.
% Figure~\ref{fig:crown} compares the expressive and optimization capabilities of quantization schemes for LLMs.
UQ demonstrates strong optimizability, efficiently reducing quantization errors, by leveraging various advanced methods such as FlexRound~\cite{flexround} and OmniQuant~\cite{omni}.
However, its expressiveness is constrained, failing to adapt its quantization levels to the weight distribution due to their uniform spacing.
Conversely, BCQ offers strong expressiveness through non-uniform quantization levels, but its optimizability is limited due to the absence of precise quantization techniques. %tailored for LLMs.
\alternating~\cite{xu} is the only BCQ method applicable to LLMs,
but its major drawback is that it does not consider the input distribution.
In summary, neither scheme fully achieves both strong expressiveness and optimizability.

In this paper, we propose \textbf{\methodf} (\methodffullbold), an accurate quantization method for LLMs.
We find that the strong optimizability of UQ originates from its transformation process, and the strong expressiveness of BCQ originates from its generalized mapping process that maps weights to non-uniform quantization levels (see Section~\ref{subsec:uni}).
Building on this observation, we define \textbf{\method} (\methodfullbold) which unifies UQ and BCQ schemes by integrating UQ's transformation process into BCQ's mapping process, thereby harnessing both strong optimizability and expressiveness as shown in Figure~\ref{fig:crown}(c).
In \methodf, we unify FlexRound and \alternating, the best-performing UQ and BCQ methods, respectively, following the quantization process of \method.
We improve the accuracy of \methodf with two main ideas: 1) unified initialization for joint initialization of quantization parameters from FlexRound and \alternating, and 2) local and periodic mapping to accelerate the slow mapping process of BCQ.
We further remove the extra deployment cost caused from the unification by integrating the two-step inference process into a single step with unification theorem (see Theorem~\ref{thm:unification}).
As a result, \methodf exhibits the best performance
%among UQ and BCQ methods
without any additional memory and computational costs at deployment.
% }

Our main contributions are as follows:
% We summarize the main contributions of this paper as follows:
%
\begin{itemize}[leftmargin=3mm, itemsep=-1mm, topsep=-1mm]
% \begin{itemize*}
    \item \textbf{Algorithm.}
    We propose \methodf, an accurate quantization method for LLMs.
    \methodf unifies the best-performing UQ and BCQ methods to leverage both of their advantages.
    \item \textbf{Analysis.}
    We analyze that \methodf exhibits both strong expressiveness and optimizability.

    \item \textbf{Experiments.}
    We show that \methodf outperforms UQ and BCQ methods on various benchmarks, showing up to 4.60\% higher accuracy.
\end{itemize}
\vspace{1mm}
{The source code for implementing \methodf is publicly available at \url{https://github.com/snudm-starlab/UniQuanF}.}

% \vspace{-1mm}
The rest of this paper is organized as follows.
Section~\ref{sec:prelim} defines LLM quantization problem and provides preliminaries.
We propose \methodf in Section~\ref{sec:proposed}.
Section~\ref{sec:exp} presents the experimental results, followed by a discussion of related works in Section~\ref{sec:related}.
Finally, we conclude the paper with a summary of our findings.

\section{Preliminary}
\label{sec:prelim}
%\vspace{-1mm}
%%%%%%%%%%%%%%%% Figure. Comparison of quantization schemes %%%%%%%%%%%%%%%%%
% \input{figures/quant_schemes}
%%%%%%%%%%%%%%%%%%%%%%%%%%%%%%%%%%%%%%%%%%%%%%%%%%%%%%%%%%%%%%%%%%%%%%%%%%%%%%
%
We introduce the LLM quantization problem and describe preliminaries.
We describe the frequently used terminologies in Appendix~\ref{app:term}.
% \vspace{-1mm}
%
%===============================================
%                 Section 2.1
%===============================================
% ============== Quantization levels =================
\begin{figure}[t]
    \centering
    \includegraphics[width=0.49\textwidth]{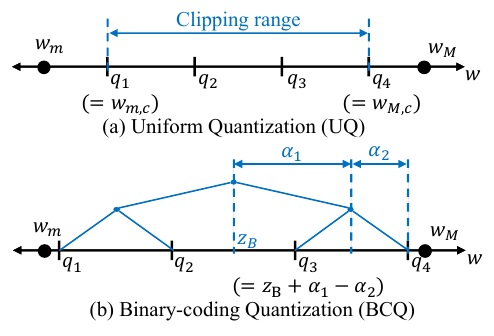}
    % \vspace{-6mm}
    % \renewcommand{\arraystretch}{0.97}
    \caption{
	Illustrations of quantization levels ($q$) assigned for a weight group $\vw$ under (a) UQ and (b) BCQ schemes, where $w_m$ and $w_M$ are the minimum and the maximum weights in $\vw$, respectively.
    UQ has evenly-spaced quantization levels within a clipping range while BCQ has non-uniform quantization levels determined by its scale factors $\valpha$ and a shifting factor $z_B$.
    }
    \label{fig:levels}
    \vspace{-2.5mm}
\end{figure} 
% ============== Quantization levels =================
% ============= quantization schemes figure ===============
\begin{figure*}[t]
    \centering
    \includegraphics[width=1.0\textwidth]{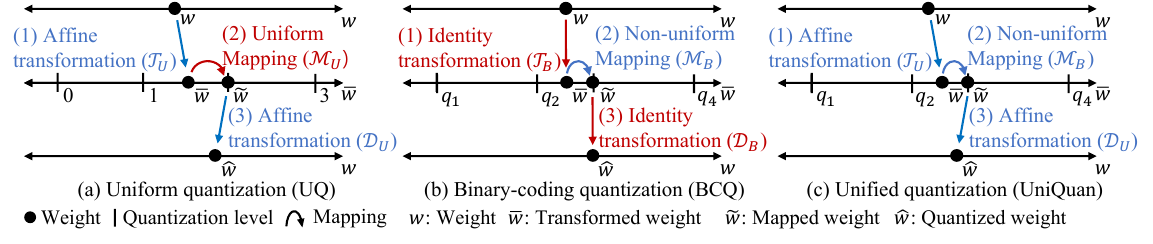}
    % \vspace{-3.5mm}
    \vspace{-6mm}
    \caption{
    % \green{
    Quantization processes for a weight $w$ in (a) UQ, (b) BCQ, and (c) \method schemes.
    % We represent the quantization process for  in a weight group ($\vw$) for simplicity.
    Blue-colored processes are parameterized functions which are the source of UQ's strong optimizability and BCQ's powerful expressiveness, respectively.
    \method takes the advantages of both UQ and BCQ schemes by combining the parameterized functions in both schemes.
    % Our proposed scheme, \method combines UQ's affine transform ($\trans_U$ and $\detrans_U$) and BCQ's mapping ($\map_B$) taking the advantages of both UQ and BCQ schemes for enhanced accuracy.
    See Section~\ref{subsec:uni} for details.
    % }
    }
    % \blue{
    % Comparison of quantization processes of RTN, FlexRound, Alternating, and \method.
    % \method leverages the useful optimization technique of UQ (flexible mapping) and the powerful expressive capability of BCQ (adaptive quantization levels), resulting in effective quantization.
    % }
    \label{fig:schemes}
    \vspace{-2mm}
\end{figure*} 
% =========================================================
\subsection{LLM Quantization Problem}
% \green{
We have a pre-trained LLM $F$, a desired bit-width $k$, and a sample dataset $\sD$.
Our objective is to find an accurate $k$-bit quantized model $\widehat{F}$.
The target model is a Transformer-based~\cite{transformer} LLM~\cite{mistral,llama3} with $N$ blocks, where quantization is applied to the weight matrices in each block.
We divide each weight matrix into multiple weight groups that share quantization parameters.
% Recent LLMs~\cite{mistral,llama3} have Transformer-based~\cite{transformer} architecture which consists of a number $N$ of blocks.
% LLM quantization methods quantize weight matrices in each block, and we divide each weight matrix into multiple weight groups sharing quantization parameters.
% }
Given a group $\vw\in \sR^{g}$ of $g$ weight values and a desired bit-width $k$,
a quantizer $Q$ quantizes $\vw$ into $\widehat{\vw}=Q(\vw,k;\Theta)$ where $\Theta$ is a set of quantization parameters.
%
% \green{
In this paper, we focus on two quantization schemes: uniform quantization (UQ) and binary-coding quantization (BCQ) which are supported by the state-of-the-art inference kernel~\cite{lut_gemm}.
% They achieve identical memory efficiency and inference speed since both are supported by fast inference kernels~\cite{lut_gemm}.
% , which provides faster inference speed than existing UQ kernels~\cite{optq,awq} (see Table 1 in~\citet{lut_gemm}).
%
We directly compare their accuracy since they require the same memory and computational costs. 
% }
% \vspace{-1mm}

%===============================================
%             Section 2.2 UQ
%===============================================
%%%%%%%%%%%%%%% UQ %%%%%%%%%%%%%%%%%
\subsection{Uniform Quantization (UQ)}
\label{subsec:UQ}
Uniform quantization (UQ)~\cite{flexround,omni} is a quantization scheme that has uniformly spaced quantization levels.
We explain UQ using
Round-to-nearest (RTN)~\cite{rtn}, a representative method of UQ.
Figure~\ref{fig:levels}(a) illustrates the quantization levels in UQ that are evenly distributed in a clipping range $[w_{m,c}, w_{M,c}]$,
where $w_{m,c}$ and $w_{M,c}$ are the minimum and maximum values in the clipping range, respectively.
RTN begins with a grid search to find the proper $w_{m,c}$ and $w_{M,c}$ to precisely approximate the original weights.
The quantization parameters $\Theta_R=\{\Delta, z_{U}\}$ of RTN are derived based on its clipping range;
scale factor $\Delta = (w_{M,c} - w_{m,c})/ (2^{k}-1)$ and zero-point $z_{U} = \lfloor-w_{m,c}/\Delta\rceil$, where
$\lfloor \cdot \rceil$ is the rounding function.
Then, RTN quantizer $Q_R$ quantizes $\vw \in \sR^{g}$ into $\widehat{\vw} = Q_R(\vw,k;\Theta_R)$ as follows:
%
% \vspace{-3mm}
\begin{equation*}
\begin{split}
\widetilde{\vw} & = \textit{Clip} \left(\lfloor
\vw/\Delta + z_U \bm{1}_g \rceil, 0, 2^k-1
\right), \\
\widehat{\vw} & =\Delta (\widetilde{\vw}- z_{U}\bm{1}_g),
% \vspace{-2mm}
\end{split}
\end{equation*}
where $\textit{Clip}(\cdot, m, M)$ is an element-wise clipping function with a min-max range $[m, M]$, and $\widetilde{\vw}$ is a vector of low-bit integers assigned for each weight.
$\bm{1}_g$ is a vector of size $g$ filled with ones.

%===============================================
%              Section 2.3 BCQ
%===============================================
%%%%%%%%%%%%%%% BCQ %%%%%%%%%%%%%%%%%
% \newpage
\subsection{Binary-coding Quantization (BCQ)}
\label{subsec:BCQ}
% \green{
Binary-coding quantization (BCQ) is a non-uniform quantization scheme with adaptive quantization levels.
%adaptively assigns binary codes.
Its quantization parameters $\Theta_B=\{\valpha, z_{B}\}$ consists of a vector $\valpha\in \sR^{k}$ of scale factors and a shifting factor $z_{B}\in\sR$.
Figure~\ref{fig:levels}(b) illustrates the quantization levels in BCQ that are determined through the summation and subtraction of scale factors $\valpha$ after shifting with $z_{B}$.
In this example, BCQ quantizer defines the set
$\{
z_{B} + \alpha_{1} + \alpha_{2},
z_{B} + \alpha_{1} - \alpha_{2},
z_{B} - \alpha_{1} + \alpha_{2},
z_{B} - \alpha_{1} - \alpha_{2}
\}$
of four quantization levels forming a binary tree centered at $z_B$ with widths $\alpha_1$ and $\alpha_2$.

We assign a binary code $\vc \in \{-1, +1\}^{k}$ for each weight $w$, indicating whether the weight selects the positive or negative value for each scale factor in $\valpha$.
Then, BCQ quantizer $Q_B$ quantizes a weight group $\vw \in \sR^{g}$ into $\widehat{\vw}=Q_B(\vw;\Theta_B)$ as follows:
%
% \vspace{-2mm}
\begin{align*}
    &\widehat{\vw} =\mC\valpha + z_{B}\bm{1}_g, \\
   where \;\; \mC =& \argmin_{\mC'} || \vw - (\mC'\valpha + z_{B}\bm{1}_g) ||_2^2.
% \vspace{-3mm}
\end{align*}
$\mC\in\{-1,+1\}^{g\times k}$ is the binary code matrix for $\vw$ where its $i$th row contains the binary code for the $i$th weight in $\vw$.
Existing works~\cite{xu} first set $z_B$ as 0, then search the scale factors $\valpha$ and binary codes $\mC$ through an alternating update process (see Algorithm 2 in \citet{xu}).

The main advantage of BCQ is its strong expressiveness; any UQ is representable in the form of BCQ (see Appendix C in~\citet{lut_gemm}).
However, BCQ methods are far less accurate in quantizing LLMs than UQ methods due to the lack of accurate BCQ-based quantization methods.
In this paper, we propose \methodf which transfers UQ's accurate quantization methods to BCQ to fully exploit BCQ's strong expressiveness. 

% \section{Motivation}
% \label{sec:motivation}
% \input{030motivation}

\section{Proposed Method}
\label{sec:proposed}
%\vspace{-1mm}
%%%%%%%%%%%%%%%% Table 1. Comparison of quantization schemes %%%%%%%%%%%%%%%%%
% Please add the following required packages to your document preamble:
% \usepackage{booktabs}
% \usepackage{multirow}
\setlength{\tabcolsep}{3pt}
\begin{table*}[t]\centering
% \vspace{-2mm}
% \renewcommand{\arraystretch}{0.95}
\caption{
Comparison of quantization methods.
\methodf incorporates the advantages of FlexRound and \alternating by exploiting $\ftrans$, $\detrans_R$, and $\lpgmap$, overcoming the challenges in \naivef. 
$\circ$ is a function composition operator. 
% \blue{$\circ$ is a composition operator for functions.
See Sections~\ref{subsec:challenges} for details.
% }
}\label{tab:schemes}
\begin{tabular}{ccccc}
\toprule
\multicolumn{1}{c}{\multirow{2}[3]{*}{\textbf{Method}}} &
  \multicolumn{2}{c}{\textbf{Optimization}} &
  \multicolumn{2}{c}{\textbf{Deployment}} \\
  \cmidrule(lr){2-5}
\multicolumn{1}{c}{} &
  \multicolumn{1}{c}{\textbf{Initialization}} &
  \multicolumn{1}{c}{\textbf{Quantization}} &
  \multicolumn{1}{c}{\textbf{Parameters}} &
  \multicolumn{1}{c}{\textbf{Inference}} \\ \midrule
FlexRound
&  Grid search
% & $\detrans_F \circ \umap \circ \ftrans$
% &  $\gmap$
% & $\detrans_F \circ \gmap \circ \ftrans$
% & $\detrans_F \lpgmap \circ \ftrans$
& $\detrans_R \circ \umap \circ \ftrans$
& $\widetilde{\vw}$, $\Theta_R$
& $\detrans_R(\widetilde{\vw};\Theta_R)$
% \\
% OmniQuant
% &  Clipping range search
% & $\detrans_O(\umap(\otrans(\vw;\Theta_O))\Theta_O)$
% & $\widetilde{\vw}$, $\Theta_U$
% & $\detrans_U(\widetilde{\vw};\Theta_U)$
\\
\alternating
&  Alternating update
% & $\gmap(\vw;\Theta_B)$
& $\gmap$
& $\mC$, $\Theta_B$
& $\recon(\mC;\Theta_B)$
\\
% Na\"iveMerge
\naivef
&  -
% &  $\detrans_F(\gmap(\ftrans(\vw;\Theta_F);\Theta_B)\Theta_F)$
& $\detrans_R \circ \gmap \circ \ftrans$
&  $\mC$, $\Theta_B$, $\Theta_R$
&  $\detrans_R(\recon(\mC;\Theta_B);\Theta_R)$
\\
% \rowcolor[HTML]{EAFDE3}
\textbf{\methodf (Proposed)}
&  Unified initialization
% &  $\detrans_F(\lpgmap(\ftrans(\vw;\Theta_F);\Theta_B);\Theta_F)$
&  $\detrans_R \circ \lpgmap \circ \ftrans$
&  $\mC$, $\Theta^*_B$
&  $\recon(\mC;\Theta^*_B)$
\\ \bottomrule
\end{tabular}
\vspace{1mm}
\end{table*}
\setlength{\tabcolsep}{6pt}

We propose \textbf{\methodf} (\methodffullbold), an accurate quantization method for LLMs.
We first propose \textbf{\method} (\methodfullbold), a framework unifying the quantization processes of UQ and BCQ schemes to exploit their strong expressiveness and optimizability.
Then, we present \naivef, a na\"ive method that unifies FlexRound and \alternating following \method, but with several limitations. %challenges in inaccurate optimization and extra memory and computational costs at deployment.
Finally, we propose \methodf which achieves high accuracy without requiring extra costs at deployment, by addressing the limitations in \naivef.

\subsection{Unification of UQ and BCQ}
\label{subsec:uni}

We define a general representation of the quantization processes in UQ and BCQ schemes to clarify the source of their superior capabilities.
Given a quantizer $Q$ with quantization parameters $\Theta$, the general representation is defined as follows:
%
% \vspace{-2.5mm}
\begin{equation*}
    \widehat{\vw} = Q(\vw; \Theta) = \mathcal{D}(\mathcal{M}(\mathcal{T}(w;\Theta);\Theta);\Theta),
    % \vspace{-2.5mm}
\end{equation*}
where $\mathcal{T}$ is a transformation process which transforms weights $\vw$ to transformed weights $\bar{\vw}$, $\map$ is a mapping process which maps transformed weights $\bar{\vw}$ to the corresponding quantization levels, and $\detrans$ is a detransformation process which reverts mapped weights $\widetilde{\vw}$ to the original weight space, yielding quantized weights $\widehat{\vw}$.
We illustrate the quantization processes in diverse quantization schemes in Figure~\ref{fig:schemes} following this general representation.

\smallsection{Optimizability of UQ}
As shown in Figure~\ref{fig:schemes}(a), we formulate the quantizer $Q_U$ of UQ as follows:
%
% \vspace{-2mm}
\begin{equation*}
    Q_U(\vw,k;\Theta_U)=\udetrans(\umap(\utrans(\vw;\Theta_U),k);\Theta_U),
    % \vspace{-2mm}
\end{equation*}
where $\umap(\vw,k)=Clip(\lfloor \vw \rceil,0,2^k-1)$ is a uniform mapping function that maps transformed weights $\widetilde{\vw}$ to uniform quantization levels.
$\utrans$ and $\udetrans$ are affine transformation functions with quantization parameters $\Theta_U$.
%$\Theta_U$, $\utrans$, and $\udetrans$ are determined according to the quantization method.
% $\Theta_U$, $\utrans$, and $\udetrans$ are quantization parameters, transformation function, and detransformation function determined according to the quantization method.
%
There are two main methods belonging to UQ: RTN and FlexRound.
In RTN, the quantizer $Q_R$
%in Section~\ref{subsec:UQ}
is formulated as follows:
%
% \vspace{-1mm}
\begin{equation*}
    Q_R(\vw,k;\Theta_R)=\detrans_R(\umap(\trans_R(\vw;\Theta_R),k);\Theta_R),
    % \vspace{-1mm}
\end{equation*}
where $\Theta_R=\{\Delta, z_{U}\}$, $\trans_R(\vw;\Theta_R)=\vw/\Delta+z_U\bm{1}_g$ and $\detrans_R(\vw;\Theta_R)=\Delta(\vw-z_U\bm{1}_g)$.
In FlexRound,
the main idea is to allow each weight to explore diverse quantization levels to find the optimal one.
This is achieved by replacing the transformation function $\trans_R$ with $\ftrans$ defined as follows:
%
% \vspace{-3mm}
\begin{equation}
    \ftrans(\vw;\Theta_F) = \vw\oslash(\Delta \cdot \vs \cdot s_r) + z_U\bm{1}_g, \label{eq:ftrans}
    % \vspace{-1mm}
\end{equation}
where $\Theta_F=\{\Delta, z_U, \vs, s_r\}$ is a set of quantization parameters of FlexRound, and $\oslash$ denotes an element-wise division.
$\vs\in\sR^{g}$ and $s_r\in\sR$ are element-wise and row-wise scale factors to promote the exploration of weights, respectively.

In summary, each quantization method has its own $\Theta_U$, $\trans_U$, and $\detrans_U$,
and they are the origin of strong optimizability. % in UQ methods.
We formulate the quantizers of various UQ methods in Appendix~\ref{app:sec:extension}.

\smallsection{Expressiveness of BCQ}
As shown in Figure~\ref{fig:schemes}(b), we formulate the quantizer $Q_B$ of BCQ as follows:
% \vspace{-3mm}
%
\begin{equation*}
    Q_B(\vw;\Theta_B)=\bdetrans(\gmap(\btrans(\vw);\Theta_B)),
    % \vspace{-1mm}
\end{equation*}
where $\btrans(\vw)=\bdetrans(\vw)=\vw$ are identity functions.
$\gmap(\vw;\Theta_B)=Q_B(\vw;\Theta_B)$ is a non-uniform mapping function that maps transformed weights $\widetilde{\vw}$ to the nearest non-uniform quantization levels.
$\gmap$ enables the mapping between weights and non-uniform quantization levels, enabling strong expressiveness.
\alternating~\cite{xu} is the only quantization method in BCQ scheme applicable for LLMs with the same quantizer $Q_B$ defined above.
% Therefore, $\gmap$ is the origin of BCQ's strong expressiveness.

\smallsection{\method (Unified Quantization)}
We propose \method, a framework unifying the quantization processes of UQ and BCQ schemes.
% to leverage both strong optimizability and expressiveness.
\method formulates the unified quantizer $Q_I$ with quantization parameters $\Theta_I=\{\Theta_B, \Theta_U\}$ as follows (see Figure~\ref{fig:schemes}(c)):
%
% \vspace{-2mm}
\begin{equation*} \label{eq:qi}
    Q_I(\vw;\Theta_I) = \udetrans(\gmap(\utrans(\vw;\Theta_U);\Theta_B);\Theta_U).
%    \vspace{-1mm}
	% \vspace{1mm}
\end{equation*}
%
%where $\Theta_U$, $\utrans$ and $\udetrans$ are determined according to the UQ method to unify.
$Q_I$ has strong expressiveness by mapping weights to the non-uniform quantization levels using $\gmap$.
Also, $Q_I$ has strong optimizability by incorporating useful methods in UQ by replacing $\Theta_U$, $\trans_U$, and $\detrans_U$ with those of the method to incorporate.
We elaborate on incorporating FlexRound~\cite{flexround}, the best-performing UQ method, into BCQ in Sections~\ref{subsec:challenges} and~\ref{sec:overview}.
We cover the incorporation of AWQ~\cite{awq}, OmniQuant~\cite{omni}, and GPTQ~\cite{optq} into BCQ in Section~\ref{app:sec:extension}.
% $Q_I$ has strong optimizability and expressiveness by having $\gmap$, $\trans_U$, and $\detrans_U$.

% app:subsec:comp_detail
\subsection{\methodfz}
\label{subsec:challenges}
We introduce a basic method \naivef which naively unifies the best-performing UQ and BCQ methods, FlexRound~\cite{flexround} and \alternating~\cite{xu}, respectively.
%using the unified quantization process discussed in Section~\ref{subsec:uni}.
The quantizer $Q_{I_F}$ of \naivef is defined as follows:
%
% \vspace{-2mm}
\begin{equation} \label{eq:qif}
\begin{split}
    Q_{I_F}&(\vw;\Theta_{I_F}) \\
    & = \detrans_R(\gmap(\ftrans(\vw;\Theta_F);\Theta_B);\Theta_R).
\end{split}
% \vspace{-2mm}
\end{equation}

Note that FlexRound begins its optimization process by initializing its quantization parameters with a grid search process to find a proper clipping range.
FlexRound enhances its optimizability by utilizing $\ftrans$ in Equation~\ref{eq:ftrans} and optimizes its quantization parameters $\Theta_F$ using stochastic gradient descent (SGD).
After optimization, FlexRound stores low-bit weights $\widetilde{\vw}=\umap(\ftrans(\vw;\Theta_F),k)$ and quantization parameters $\Theta_R=\{\Delta, z_U\}$, and approximates the original weights $\vw$ as $\widehat{\vw}=\detrans_R(\widetilde{\vw};\Theta_R)$ in the detransformation step; $\vs$ and $s_r$ in $\Theta_F$ are used only for transformation.
On the other hand,
\alternating initializes its quantization parameters in $\Theta_B=\{\valpha, z_B\}$ with alternating update~\cite{xu}.
Then, it stores the binary code matrix $\mC$ and quantization parameters $\Theta_B$.
It approximates weights $\vw$ as $\widehat{\vw}=\mathcal{R_B}(\mC;\Theta_B) = \mC\valpha + z_{B}\bm{1}_g$ for inference.

\naivef inherits the strong optimizability from FlexRound's $\ftrans$ and $\detrans_R$, and the strong expressiveness from \alternating's $\gmap$.
However, \naivef has three points of improvements as follows: % in optimization and deployment.
\begin{itemize}[leftmargin=7mm, itemsep=-1mm, topsep=-1mm]
\item[\textbf{C1.}] \textbf{Joint initialization.}
FlexRound and \alternating are based on different initialization methods which seem incompatible.
How can we jointly initialize the quantization parameters considering their dependency?
\item[\textbf{C2.}] \textbf{Slow mapping process.}
The quantization levels of \naivef are changed during optimization process where its quantization parameters are updated.
Therefore, it is crucial to update the mappings between weights and quantization levels during optimization using $\gmap$.
However, existing method~\cite{xu} exploits a binary search tree (BST) to implement $\gmap$ which finds the mappings through a GPU-unfriendly sequential process, and it requires intractable time for updating mappings during optimization.
How can we expedite the mapping process?
%
% \naivef cannot update the mapping between weights and quantization levels during optimization since $\gmap$ is too slow.
\item[\textbf{C3.}] \textbf{Expensive deployment cost.}
\naivef suffers from memory and computational overheads at deployment, since \naivef require executing both inference procedures $\detrans_R$ and $\recon$ (see Table~\ref{tab:schemes}).% using $\Theta_I=\{\Theta_B, \Theta_R\}$.
How can we decrease the inference cost?
% \naivef is inefficient for deployment as it stores parameters from both methods and performs two-step inference using $\recon$ and $\detrans_R$.
\end{itemize}
%

%================ UniQuan Overview====================
% ===============================================
%       Algorithm: UniQuan
% ===============================================

% Algorithmic

\begin{algorithm}[t]
    \caption{\methodf for a block $f$}\label{alg:uniquan}
    \begin{algorithmic}[1]
        \item[\textbf{Input:}]  % {\bfseries Input:}
         A Transformer block $f$, $f$'s input $\mX$, a set $\Phi$ of weight groups in $f$, the input $\widehat{\mX}$ of a quantized $f$ obtained by preceding quantized blocks, number $E$ of epochs, and number $|\sD|$ of data points
        \item[\textbf{Output:}] % {\bfseries Output:}
        Optimized quantization parameters $\Theta^*_{B}$ and a binary-code matrix $\mC$ for each group
        \STATE Initialize $\Theta_{I_F}$ for each weight group
        \\ \hfill $\triangleright$ \textbf{(I1)} Unified Initialization
        \FOR {epoch $e$ in 1, 2, ..., $E$}
        \FOR {$s$ in 1, 2, ..., $|\sD|$}
        \STATE Quantize $\Phi$ into $\widehat{\Phi}$ using Equation~\ref{eq:qif}
        \\ \hfill $\triangleright$ \textbf{(I2)} Local and periodic mapping
        %(Section~\ref{sec:i2})
        % \\ \hfill (Section~\ref{sec:i2})
        \STATE
        $\mathcal{L}$ $\gets$ $|| f(\mX_{s};\Phi) - f(\widehat{\mX}_{s}; \widehat{\Phi}) ||_F^2$
        % \\ \hfill $\triangleright$ Output reconstruction loss
        \STATE Perform backpropagation using $\mathcal{L}$
        \STATE Update $\Theta_{I_F}$ for each group
        \ENDFOR
        % End of Iteration
        \ENDFOR
        % Composition Theorem
        \STATE Convert each $\Theta_{I_F}$ into $\Theta^*_{B}=\{\valpha^*, z_B^*\}$
        % \\ for each weight group
        \\ \hfill $\triangleright$ \textbf{(I3)} Unification Theorem
        % \\ (Section~\ref{sec:i3})
        % Quantize
        % \STATE Compute $\mC$ using $\Theta_B^*$ for each group
        \STATE Find $\mC$ using Equation~\ref{eq:get_C} for each group
        % \\ \purple{\hfill $\triangleright$ $\mC=\argmin_{\mC'} || \vw - (\mC'\valpha^* + z^*_{B}\bm{1}_g) ||_2^2$}
    \end{algorithmic}
\end{algorithm}

\subsection{\methodf}
\label{sec:overview}
% \vspace{-1,5mm}
% \green{

We propose \textbf{\methodf} (\methodffullbold),
which improves \naivef with the following main ideas.
%\vspace{0.2mm}
% We address these challenges with the following main ideas:
%
\begin{itemize}[leftmargin=6mm, itemsep=-1mm, topsep=-1mm]
% Transformation + Inner BCQ
\item[\textbf{I1.}]
\textbf{Unified initialization.}
We unify the initialization algorithms in FlexRound and \alternating to jointly initialize the quantization parameters, considering their dependency.
%
% We decompose the quantization process of conventional BCQ into the unification of UQ and inner BCQ to utilize the optimization techniques from UQ.
% Decomposed BCQ takes advantage of both UQ's advanced optimization techniques and BCQ's strong expressive capability simultaneously.
% General rounding + Flexible Rounding
%
\item[\textbf{I2.}]
\textbf{Local and periodic mapping.}
We efficiently update mappings by exploiting the locality and the temporal sparsity of changes in mappings. %s to  update mapping.
% We devise a fast mapping process by considering the sparse changes in mapping.
% Our initialization process is accurate by considering both clipping range and adaptive quantization levels at the same time.
%
\item[\textbf{I3.}] \textbf{Unification theorem.}
We unify the inference process of \naivef into a single BCQ's inference process, removing additional memory and computational costs at deployment.
% We optimize the quantization parameters of \method by minimizing blockwise reconstruction errors.
% We propose novel optimization techniques including Gradient Filtering and Periodic Remapping to maximize the accuracy of the quantized models.
\end{itemize}
% %
\vspace{1mm}

\methodf quantizes an LLM through a block-wise optimization strategy, quantizing sequentially from the bottom-most Transformer block upwards.
Algorithm~\ref{alg:uniquan} details the process of quantizing a Transformer block $f$ with a set $\Phi$ of weight groups, to obtain a binary-code $\mC$ and optimized quantization parameters $\Theta_B^*$ for each group.
We provide inputs $\mX$ and $\widehat{\mX}$ of unquantized and quantized blocks, respectively.
We iterate the optimization process for $E$ epochs using $|\sD|$ data points.

First, for each weight group, we initialize the quantization parameters $\Theta_{I_F}=\Theta_B \cup \Theta_F$ using the unified initialization technique (line 1) which we describe later in this subsection.
Subsequently, we iteratively optimize the quantization parameters to minimize the distance between the outputs of the quantized and unquantized blocks (lines 2-9).
In each optimization step, we quantize the weight groups in $\Phi$ using \methodf's quantizer $Q^*_{I_F}$ with its quantization parameters $\Theta_{I_F}$ as follows:
% with its quantization parameters $\Theta_{I_F}=\Theta_F \cup \Theta_B$ as follows:
%
% \vspace{-2mm}
\begin{equation} \label{eq:qifhat}
\begin{split}
    Q^*_{I_F}&(\vw;\Theta_{I_F}) \\
    & = \detrans_R(\lpgmap(\ftrans(\vw;\Theta_F);\Theta_B);\Theta_R),
\end{split}
% \vspace{-2mm}
\end{equation}
where $\lpgmap$ represents the local and periodic mapping technique, which we describe later in this subsection, to accelerate mappings to non-uniform quantization levels.
$\ftrans$ is the transformation function of FlexRound in Equation~\ref{eq:ftrans}.
We quantize each weight group in parallel, and the quantized weight groups are stored in $\widehat{\Phi}$ (line 4).
We measure the reconstruction loss $\mathcal{L}$ which represents the distance between the outputs of the original and quantized blocks, and then optimize the quantization parameters $\Theta_{I_F}$ via stochastic gradient descent (lines 5-7).
After the iterative optimization process, we have the optimized quantization parameters $\Theta_{I_F}=\Theta_B \cup \Theta_F$ which generate the outputs closely resembling those of the original block.
We leverage the unification theorem (Theorem~\ref{thm:unification}) to merge the optimized quantization parameters into a single set of BCQ's quantization parameters $\Theta^*_B=\{\valpha^*, z_B^*\}$ (line 10).
Finally, we obtain a binary-code matrix $\mC$ using $\Theta_B^*$ for each weight group as follows:
% \vspace{-2mm}
\begin{equation} \label{eq:get_C}
    \mC=\argmin_{\mC'} || \vw - (\mC'\valpha^* + z^*_{B}\bm{1}_g) ||_2^2.
    % \vspace{-2mm}
\end{equation}
Then, the quantization process for a block is completed (line 11).
We obtain an entirely quantized model by sequentially applying Algorithm~\ref{alg:uniquan} from the bottom to the top blocks.
% Then, quantize each weight group in $\Phi$ using $\Theta_B^*$ to obtain a binary-code matrix $\mC$ (line 11).
% \green{We detail our main ideas in the following sections.}

%===============================================
%                 Section 3.3
%===============================================
% \subsection{Unified Initalization}
% \label{sec:i1}
\smallsection{Unified Initialization}
How can we jointly initialize the quantization parameters of FlexRound and \alternating?
% Following FlexRound~\cite{flexround}, we initialize $\vs$ and $s_r$ as $\bm{1}_g$ and 1, respectively.
FlexRound initializes $\Delta$ and $z_U$ through a grid search process, and \alternating utilizes an alternating update process to find $\valpha$, fixing $z_B$ as 0.
We propose a unified initialization process which initializes the quantization parameters $\Theta_{I_F}=\{ \Delta, z_U, \vs, s_r, \valpha, z_B \}$ by unifying both techniques, considering their dependency.

In unified initialization, we perform alternating updates to find $\valpha$ and $z_B$ in each iteration of a grid search process for $\Delta$ and $z_U$.
As a result, we find $\valpha$ and $z_B$ according to the $\Delta$ and $z_U$ during the grid search process.
As in FlexRound, we initialize $\vs$ and $s_r$ as $\bm{1}_g$ and 1, respectively.
We detail the unified initialization algorithm in Appendix~\ref{subsec:A_ui}.

\setlength{\tabcolsep}{3pt}
% \vspace{-3mm}
% \begin{table}[t] \centering
% \vspace{-2mm}
% \caption{
% \green{
% The proportion of index changes per optimization step and throughout all 2,560 steps.
% Each step results in minimal changes, but substantial shifts accumulate overall.
% }
% % The percentages of changes in the indices of corresponding quantization level for each weight during the optimization process.
% % While the index changes at a single step are minimal, they undergo significant shifts over the entire optimization process.
% }\label{tab:idf}\vspace{1mm}
% \begin{tabular}{cccccc}
% \toprule
% \multirow{2}[3]{*}{\textbf{Step size}} & \multicolumn{5}{c}{\textbf{Index change}}      \\
% \cmidrule(lr){2-6}
%                                     & \textbf{0} & \textbf{1} & \textbf{2}
%                                     & \textbf{3} & \textbf{$>$3} \\ \midrule
% Single step      & 99.97\%     & 0.01\%      & 0.00\%      & 0.01\%      & 0.00\%
% \\
% All steps (2,560) & 89.30\%     & 8.37\%      & 1.36\%      & 0.94\%      & 0.03\%
% \\ \bottomrule
% \end{tabular}
% \end{table}
% \setlength{\tabcolsep}{6pt}

\setlength{\tabcolsep}{3pt}
% \vspace{-3mm}
\begin{table}[t] \centering
% \renewcommand{\arraystretch}{0.95}
% \vspace{-1mm}
\caption{
% \green{
The percentage of index changes per optimization step and throughout all 2,560 steps.
Minimal changes in each step add up to substantial shifts overall.
% Each step results in minimal changes, but substantial shifts accumulate overall.
% }
% The percentages of changes in the indices of corresponding quantization level for each weight during the optimization process.
% While the index changes at a single step are minimal, they undergo significant shifts over the entire optimization process.
}\label{tab:idf}\vspace{-1mm}
\begin{tabular}{ccccc}
\toprule
\multirow{2}[3]{*}{\textbf{Step size}} & \multicolumn{4}{c}{\textbf{Index change}}      \\
\cmidrule(lr){2-5}
                                    & \textbf{0} & \textbf{1} & \textbf{2}
                                     & \textbf{$>$2} \\ \midrule
Single step      & 99.97\%     & 0.01\%      & 0.00\%      & 0.01\% \\
All steps (2,560) & 89.30\%     & 8.37\%      & 1.36\%      & 0.97\% \\ \bottomrule
\end{tabular}
\end{table}
\setlength{\tabcolsep}{6pt} 
%==============================================================

%===============================================
%         Local and Periodic Mapping
%===============================================
% \subsection{Local and Periodic Mapping}
\smallsection{Local and Periodic Mapping}
\label{sec:i2}
% How to swiftly update the mappings between weights and quantization levels during optimization?
How can the mappings between weights and quantization levels be swiftly updated during optimization?
%
% Local and Periodic Mapping updates weight–quantization assignments without comparing to all levels each time.
Table~\ref{tab:idf} shows that while over 8\% of weights change indices of the mapped quantization levels during the full optimization, 99.97\% remain stable at each step, and most changes involve adjacent levels.
Hence, we propose a local and periodic mapping which evaluates only neighboring levels and remaps periodically.
We set a hyperparameter $p$ as a remapping period, and in every $p$ steps, we find the closest quantization level among the previous level and its two neighbors.
We detail the process of local and periodic mapping in Appendix~\ref{alg:lpmapping}.
\begin{table*}[t]
% \vspace{-3mm}
\caption{
Comparison of the average accuracy of 0-shot and 5-shot on MMLU, and the perplexity on WikiText2 (Wiki) benchmarks.
Higher accuracies and lower perplexities indicate better performance.
Bold and underlined texts indicate the best and second-best performance, respectively.
\methodf shows the best performance in most cases.% except for Llama-3 70B which has a small group size.
}
\label{tab:foundation}
\centering
% \vspace{-1mm}
% \renewcommand{\arraystretch}{1.02}
\begin{tabular}{cclcccccc}
\toprule
&
 &
 \multicolumn{1}{c}{} &
 \multicolumn{2}{c}{\textbf{Mistral 7B}} &
 \multicolumn{2}{c}{\textbf{Llama-3 8B}} &
 \multicolumn{2}{c}{\textbf{Llama-3 70B}} \\
&
 &
 \multicolumn{1}{c}{} &
 &
 &
 &
 &
 &
 \\
\multirow{-3}{*}{\textbf{\# Bits}} &
 \multirow{-3}{*}{\textbf{Scheme}$^\dagger$} &
 \multicolumn{1}{c}{\multirow{-3}{*}{\textbf{Method}}} &
 \multirow{-2}{*}{\begin{tabular}[c]{@{}c@{}}MMLU\\ Avg. ($\uparrow$)\end{tabular}} &
 % \multirow{-2}{*}{\begin{tabular}[c]{@{}c@{}}CSR\\ Avg. ($\uparrow$)\end{tabular}} &
 \multirow{-2}{*}{\begin{tabular}[c]{@{}c@{}}Wiki\\ ($\downarrow$)\end{tabular}} &
 \multirow{-2}{*}{\begin{tabular}[c]{@{}c@{}}MMLU\\ Avg. ($\uparrow$)\end{tabular}} &
 % \multirow{-2}{*}{\begin{tabular}[c]{@{}c@{}}CSR\\ Avg. ($\uparrow$)\end{tabular}} &
 \multirow{-2}{*}{\begin{tabular}[c]{@{}c@{}}Wiki\\ ($\downarrow$)\end{tabular}} &
 \multirow{-2}{*}{\begin{tabular}[c]{@{}c@{}}MMLU\\ Avg. ($\uparrow$)\end{tabular}} &
 % \multirow{-2}{*}{\begin{tabular}[c]{@{}c@{}}CSR\\ Avg. ($\uparrow$)\end{tabular}} &
 \multirow{-2}{*}{\begin{tabular}[c]{@{}c@{}}Wiki\\ ($\downarrow$)\end{tabular}} \\
\midrule
\multicolumn{3}{c}{Full precision} &
 61.40 &
 % 75.70 &
 5.25 &
 62.77 &
 % 74.15 &
 6.14 &
 77.52 &
 % 80.88 &
 2.86 \\
\midrule
&
 &
 RTN &
 55.20 &
 % 73.21 &
 6.04 &
 54.67 &
 % 70.08 &
 7.80 &
 67.74 &
 % {\underline{79.23}} &
 3.85 \\
&
 &
 OmniQuant &
 56.28 &
 % 73.72 &
 5.67 &
 56.02 &
 % 71.97 &
 {\underline{7.37}} &
 76.62 &
 % \textbf{79.66} &
 3.34 \\
&
 \multirow{-3}{*}{UQ} &
 FlexRound &
 {\underline{58.60}} &
 % {\underline{74.48}} &
 {\underline{5.52}} &
 {\underline{58.59}} &
 % {\underline{73.29}} &
 8.22 &
 \textbf{77.24} &
 % 77.80 &
 \underline{3.31} \\
\cdashline{2-9}
&
 &
\alternating &
 24.65 &
 % 37.13 &
 9.8e4 &
 22.95 &
 % 39.29 &
 1.3e5 &
 44.07 &
 % 67.94 &
 11.06 \\
% &
%  &
%  Alpha-only &
%  24.98 &
%  % 37.37 &
%  2.5e5 &
%  45.05 &
%  % 64.68 &
%  10.10 &
%  76.42 &
%  % 77.83 &
%  5.47 \\
\multirow{-5}{*}{4} &
 \multirow{-2}{*}{BCQ} &
%  \cellcolor[HTML]{EAFDE3}\textbf{\methodf} &
%  \cellcolor[HTML]{EAFDE3}\textbf{59.22} &
% %  \cellcolor[HTML]{EAFDE3}\textbf{74.85} &
%  \cellcolor[HTML]{EAFDE3}\textbf{5.51} &
%  \cellcolor[HTML]{EAFDE3}{\textbf{61.43}} &
% %  \cellcolor[HTML]{EAFDE3}{\textbf{73.44}} &
%  \cellcolor[HTML]{EAFDE3}{\textbf{7.01}} &
%  \cellcolor[HTML]{EAFDE3}{\underline{76.97}} &
% %  \cellcolor[HTML]{EAFDE3}79.21 &
%  \cellcolor[HTML]{EAFDE3}\textbf{3.19} \\
% No color
\textbf{\methodf} &
\textbf{59.22} &
%  \cellcolor[HTML]{EAFDE3}\textbf{74.85} &
\textbf{5.51} &
{\textbf{61.43}} &
%  {\textbf{73.44}} &
{\textbf{7.01}} &
{\underline{76.97}} &
%  79.21 &
\textbf{3.19} \\

\midrule
&
 &
 RTN &
 28.59 &
 % 53.60 &
 36.07 &
 24.35 &
 % 40.67 &
 120.50 &
 40.61 &
 % 66.13 &
 24.42 \\
&
 &
 OmniQuant &
 32.82 &
 % 63.01 &
 9.24 &
 26.34 &
 % 51.34 &
 42.06 &
 71.13 &
 % 74.43 &
 \underline{5.23} \\
&
 \multirow{-3}{*}{UQ} &
 FlexRound &
 {\underline{53.45}} &
 % {\underline{71.90}} &
 {\underline{6.41}} &
 {\underline{50.22}} &
 % {\underline{69.06}} &
 \underline{10.11} &
 \underline{72.44} &
 % 74.63 &
 5.81 \\
\cdashline{2-9}
&
 &
 \alternating &
 24.19 &
 % 38.00 &
 1.1e4 &
 25.51 &
 % 35.77 &
 8.4e4 &
 24.06 &
 % 37.39 &
 1.5e3 \\
% &
%  &
%  Alpha-only &
%  24.87 &
%  % 37.51 &
%  1.3e4 &
%  35.09 &
%  % 56.49 &
%  19.85 &
%  {\underline{74.71}} &
%  % {\underline{75.31}} &
%  \underline{4.52} \\
\multirow{-5}{*}{3} &
 \multirow{-2}{*}{BCQ} &
 \textbf{\methodf} &
 \textbf{53.68} &
 % \textbf{72.66} &
 \textbf{6.20} &
 {\textbf{53.46}} &
 % {\textbf{70.32}} &
 {{\textbf{8.75}}} &
 \textbf{74.79} &
 % \textbf{77.56} &
 \textbf{4.24}
 % No color
%  \cellcolor[HTML]{EAFDE3}\textbf{\methodf} &
%  \cellcolor[HTML]{EAFDE3}\textbf{53.68} &
%  % \cellcolor[HTML]{EAFDE3}\textbf{72.66} &
%  \cellcolor[HTML]{EAFDE3}\textbf{6.20} &
%  \cellcolor[HTML]{EAFDE3}{\textbf{53.46}} &
%  % \cellcolor[HTML]{EAFDE3}{\textbf{70.32}} &
%  \cellcolor[HTML]{EAFDE3}{{\textbf{8.75}}} &
%  \cellcolor[HTML]{EAFDE3}\textbf{74.79} &
%  % \cellcolor[HTML]{EAFDE3}\textbf{77.56} &
%  \cellcolor[HTML]{EAFDE3}\textbf{4.24}
\\ \bottomrule
\multicolumn{9}{l}{$\dagger$ UQ and BCQ methods have the same costs when using BCQ kernels~\cite{lut_gemm}}
% \\
% \multicolumn{9}{l}{$\ddagger$ Group size of 128}
\end{tabular}
%\vspace{-1mm}
\end{table*}
\smallsection{Unification Theorem}
% \subsection{Unification Theorem}
% \label{sec:i3}
How can we eliminate the memory and computational overheads at deployment induced by the unification? % of two quantization \purple{methods}?
% \blue{}
    The deployment cost of \method is expensive since the unified quantization process in Equation~\ref{eq:qifhat} requires two-step inference procedures with $\detrans_R$ and $\recon$.
    % The unified quantization process in Equation~\ref{eq:qifhat} still requires two-staged inference process.
% \naivef suffers from memory and computational overheads at deployment, since \naivef require executing both inference procedures $\detrans_R$ and $\recon$ As denoted in Table~\ref{tab:schemes} .
We propose a unification theorem that integrates the two-step inference process of \naivef into a single BCQ inference process as follows:

\begin{theorem}[Unification Theorem] \label{thm:unification}
    Given
    a reconstruction function $\mathcal{R}_B(\mC;\Theta_B)$
    and
    a detransformation function $\mathcal{D}_R(\widetilde{\vw};\Theta_R)$ where $\widetilde{\vw} = \mathcal{R}_B(\mC;\Theta_B)$,
    there is a set $\Theta_B^*$ of unified quantization parameters such that $\mathcal{R}_B(\mC;\Theta_B^*) =
    \mathcal{D}_R(\mathcal{R}_B(\mC;\Theta_B);\Theta_R)$
    for any $\mC$, $\Theta_B$, and $\Theta_R$.
\end{theorem}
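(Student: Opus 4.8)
The plan is to unfold both sides explicitly and read off the required parameters by matching coefficients. Recall that $\recon(\mC;\Theta_B) = \mC\valpha + z_B\bm{1}_g$ with $\Theta_B = \{\valpha, z_B\}$, and that $\detrans_R(\widetilde{\vw};\Theta_R) = \Delta(\widetilde{\vw} - z_U\bm{1}_g)$ with $\Theta_R = \{\Delta, z_U\}$. Substituting $\widetilde{\vw} = \recon(\mC;\Theta_B)$ gives
\[
\detrans_R(\recon(\mC;\Theta_B);\Theta_R) = \Delta\bigl(\mC\valpha + z_B\bm{1}_g - z_U\bm{1}_g\bigr) = \mC(\Delta\valpha) + \Delta(z_B - z_U)\bm{1}_g .
\]
This already has exactly the shape $\mC\valpha^* + z_B^*\bm{1}_g = \recon(\mC;\Theta_B^*)$, so the natural candidate is $\Theta_B^* = \{\valpha^*, z_B^*\}$ with $\valpha^* = \Delta\valpha$ and $z_B^* = \Delta(z_B - z_U)$.

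First I would state this candidate definition of $\Theta_B^*$ explicitly. Then I would verify the identity by a one-line computation: expand $\recon(\mC;\Theta_B^*) = \mC\valpha^* + z_B^*\bm{1}_g = \mC(\Delta\valpha) + \Delta(z_B - z_U)\bm{1}_g$, and observe this equals the expansion of $\detrans_R(\recon(\mC;\Theta_B);\Theta_R)$ derived above, using only distributivity of matrix–vector multiplication and scalar multiplication. Since the derivation used nothing about $\mC$, $\Theta_B$, or $\Theta_R$ beyond their defining forms, the equality holds for all of them, which is exactly the claim.

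There is essentially no obstacle here — the detransformation $\detrans_R$ is affine and the reconstruction $\recon$ is affine in its parameters, so their composition stays within the same affine family, and closure is immediate. The only thing to be careful about is bookkeeping: making sure the scalar $\Delta$ distributes correctly over both the $\mC\valpha$ term and the shift term, and that $z_B$ and $z_U$ are combined with the right sign (the detransformation subtracts $z_U$, while $z_B$ enters additively, so they appear as $z_B - z_U$). If anything, the ``hard'' part is purely expository: stating the result at the right level of generality so that it visibly covers the optimized parameters produced in line~10 of Algorithm~\ref{alg:uniquan}, and noting that once $\Theta_B^*$ is stored together with $\mC$, deployment runs a single BCQ reconstruction with no residual dependence on $\Theta_R$, $\vs$, or $s_r$ — which is the whole point of the theorem.
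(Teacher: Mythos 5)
Your proposal is correct and follows essentially the same route as the paper's proof in the appendix: expand $\detrans_R(\recon(\mC;\Theta_B);\Theta_R)$ using the affine definitions, distribute $\Delta$, and read off $\valpha^* = \Delta\valpha$ and $z_B^* = \Delta(z_B - z_U)$. No gaps; the identification of $\Theta_B^*$ matches the paper exactly.
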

% \vspace{-1mm}
%
\begin{proof}
     See Appendix~\ref{app:proof}.
    %  \vspace{-1mm}
\end{proof}
As a result, \methodf exploits the UQ's optimizability without any extra memory and computational costs at deployment.
%The proof of Theorem~\ref{thm:unification} is in Appendix~\ref{app:proof}.

%===============================================
%             Theoretical Analysis
%===============================================

\smallsection{Discussion}
\methodf is interpreted as a quantization framework that augments the BCQ's quantization process with the linear transformation process of UQ.
By defining the transformation function in \methodf as a linear function, several key advantages emerge.
First, it enables the seamless integration of well-established UQ methods~\cite{flexround,awq,omni} into the BCQ framework, thereby reducing the time and effort required to develop new methods.
Second, through the unification theorem, the inference process is consolidated into a single step, allowing the system to benefit from the improved accuracy of the unified approach without incurring additional memory or computational costs at deployment. One might consider introducing non-linear transformation functions to further enhance the expressiveness of the unified quantization process.
However, this approach presents notable drawbacks since the aforementioned two key advantages do not hold for non-linear transformation functions; we cannot exploit existing UQ methods and unification theorem.
Therefore, we incorporate linear transformation processes into the BCQ's quantization process, rather than non-linear transformation processes, to enhance efficiencies in method development and deployment.

\section{Experiments}
\label{sec:exp}
%\vspace{-1mm}
% We conduct experiments to answer the followings:
We perform experiments to answer the following questions.
Additional analyses on \methodf are discussed in Appendix~\ref{app:analysis}.

\begin{itemize*}
    \item[\textbf{Q1.}] \textbf{General knowledge evaluation.}
    How accurately does \methodf quantize LLMs on general knowledge benchmarks?
    \item[\textbf{Q2.}] \textbf{Task-specific knowledge evaluation.}
    How accurately does \methodf quantize LLMs on task-specific knowledge benchmarks?
    \item[\textbf{Q3.}] \textbf{Ablation study.}
    Do all ideas in \methodf improve the accuracy of quantized LLMs?
    \item[\textbf{Q4.}] \textbf{Case study.}
    Does \methodf effectively utilize BCQ's expressiveness and UQ's optimizability as we intended?
\end{itemize*}

% Section 5.1
\subsection{Experimental Setup} \label{subsec:setup}
We briefly introduce the experimental setup.
Refer to Appendix~\ref{app:impl} for further details.

\smallsection{Setup}
We evaluate the quantized performance of Llama-3 8B, Llama-3 70B~\cite{llama3}, and Mistral 7B~\cite{mistral} models on MMLU~\cite{mmlu} and WikiText2~\cite{wiki} benchmarks.
%and six common sense reasoning (WinoGrande~\cite{wino}, ARC-challenge, ARC-easy~\cite{arc}, BoolQ~\cite{boolq}, HellaSwag~\cite{hella}, and PIQA~\cite{piqa}) benchmarks.
For task-specific experiment, we quantize Llama-3 8B Instruct~\cite{llama3} model and evaluate on GSM8K~\cite{gsm8k}.
We sample 128 token sequences of length 2048 from C4~\cite{c4} and GSM8K~\cite{gsm8k} for general and task-specific benchmarks, respectively.
All experiments are done with a single A100 GPU.

\smallsection{Baselines}
We compare \methodf with the four UQ and BCQ methods: RTN~\cite{rtn}, OmniQuant~\cite{omni}, FlexRound~\cite{flexround}, and \alternating~\cite{xu}.
%
% We do not compare the performance of \method with AlphaTuning~\cite{alphatuning} and MrBiQ~\cite{mrbiq} since they are designed for small models, such as BERT~\cite{bert} and GPT-2~\cite{gpt2}, and cannot be applied to LLMs without modification.

\smallsection{Hyperparameters}
%
% We perform channel-wise weight-only quantization except for Llama-3 70B where we adopt a group size of 128 to prevent severe accuracy loss.
We set each row of weight matrices as a single weight group except for Llama-3 70B where we divide each row into small weight groups of size 128 to prevent severe accuracy loss. Note that each row has at least 1024 weights.
Hyperparameters of \methodf is in Appendix~\ref{app:subsec:hyps}.
We provide a thorough and detailed analysis of \methodf by altering its hyperparameters to guide hyperparameter search processes in Section~\ref{app:analysis}.
% We use a Fixed-minimum clipping strategy for unified initialization.}
% We adopt only two hyperparameter configurations $\{G, T, p\}$ as $\{30, 15, 2\}$ and $\{0, 15, 2\}$ to validate the robustness of \methodf.
% We optimize quantization parameters for 20 epochs with a batch size of 1 following OmniQuant~\cite{omni}.
%
% We use a learning rate of 0.005 for UQ's parameters in $\Theta_F$ and 0.0005 for BCQ's parameters in $\Theta_B$.
% We use $p$ as 1 for Llama-3 70B model.
% }

% Section 5.2
\subsection{General Knowledge Evaluation}
% \input{tables/csr_accuracy}
% 76.43 - 73.19 = 3.24% 73.14-71.68 = 1.46%
% 78.61 - 78.36 = 0.25%, 75.86 - 75.61 = 0.25
%
We evaluate the amount of general knowledge retained in quantized models.
% by \methodf.
Table~\ref{tab:foundation} summarizes the average accuracies of 5-shot and 0-shot MMLU, and perplexity on WikiText2 (Wiki).
\methodf achieves the highest performance in almost all cases,  %except for 4-bit quantization of Llama-3 70B,
outperforming the second-best method by up to 3.24\% in average MMLU accuracy.
In Llama-3 70B, a small group size of 128
%appears to provide a sufficient number of quantization parameters,
makes a small number of weights sharing their quantization parameters,
reducing the difficulty of quantization; competitors show comparative performance to \methodf in both 3- and 4-bit cases.
%

% 라마 3 70B의 경우에는 너무 작은 group size로 인해 충분한 양의 quantization paramter들이 제공되어 다른 기법들도 비교적 우수한 성능을 보이는 것으로 보인다.
% 하지만, 다른 모든 실험에서 두 번째로 우수한 기법에 비해 최대 2.84%까지의 높은 격차로 높은 성능을 보이는 것으로 보아 \methodf는 원본 모델의 general knowledge를 잘 보존하는 것으로 보인다.
% % This demonstrates that \methodf effectively quantizes models while preserving their general knowledge.

%%%%%%%%%%%%%%%%%%%% Tables %%%%%%%%%%%%%%%%%%%
% \input{tables/mmlu_accuarcy}
\begin{table}[t]
    \centering
      \caption{
      Accuracies (\%) of quantized
      Llama-3 Instruct 8B models on GSM8K benchmark.
      \methodf outperforms the competitors in all cases.
      }
      \setlength{\tabcolsep}{9.5pt}
      \label{tab:finetuned}
      \centering
      \begin{tabular}{llcc}
      \toprule
      \textbf{Scheme} & \textbf{Method} & \textbf{4bit} & \textbf{3bit} \\
      \midrule
      \multicolumn{2}{c}{Full precision} & \multicolumn{2}{c}{76.12} \\
      \midrule
      \multirow{3}{*}{UQ} & RTN       & 49.43  & 0.08  \\
  % & GPTQ      & 50.06    & 31.11 \\
  & OmniQuant & 59.16    & 2.78 \\
  & FlexRound & \underline{70.66} & \underline{54.13} \\
  %    \cdashline{1-4}
      \hdashline
      \multirow{2}{*}{BCQ} & \alternating & 0.00 & 0.00  \\
  %  & Alpha-only & 60.22 & 30.28 \\
  %  & \cellcolor[HTML]{EAFDE3}\textbf{\methodf}
  %  & \cellcolor[HTML]{EAFDE3}\textbf{72.38}
  %  & \cellcolor[HTML]{EAFDE3}\textbf{58.73} \\
  % % No color
   & \textbf{\methodf}
   & \textbf{72.38}
   & \textbf{58.73} \\
      \bottomrule
\end{tabular}
\end{table}
  \setlength{\tabcolsep}{6pt}

%   \begin{minipage}{0.47\textwidth}
%     \setlength{\tabcolsep}{6pt}

%     \caption{
%     \green{
%     The average of 0-shot and 5-shot MMLU benchmarks of a quantized Llama-3 8B model with and without the main ideas of \method.
%     All	ideas contribute to the improved performance.
%     }
%     \renewcommand{\arraystretch}{0.95}
%     \setlength{\tabcolsep}{9.5pt}
%     \label{tab:ablation}
%     \centering
%     \begin{tabular}{lcc}
%     \toprule
%     \textbf{Method} & \textbf{4bit} & \textbf{3bit}\\
%     \midrule
%     % \rowcolor[HTML]{EAFDE3}
%     \textbf{\methodf} & \textbf{61.44} & \textbf{53.46}     \\
%     - remapping  & 56.89 & 47.83       \\
%     - unified init. ($\Theta_F$)  & 24.70 & 22.87   \\
%     - unified init. ($\Theta_B$)  & \underline{60.84} & \underline{52.14} \\
%     \bottomrule
%     \end{tabular}
%       \end{minipage} % \vspace{-4mm}
%     
%%%%%%%%%%%%%%%%%%%%%%%%%%%%%%%%%%%%%%%%%%%%%%%

% Section 4.3
\subsection{Task-specific Knowledge Evaluation}
%
% We evaluate the amount of task-specific knowledge retained \purple{in quantized models.}
%. by \methodf.
Table~\ref{tab:finetuned} summarizes the accuracies of a quantized Llama-3 8B Instruct model evaluated on GSM8K benchmark, which focuses exclusively on mathematical problems.
Note that GSM8K is a difficult task in which models should generate the exact answer, while general knowledge benchmarks require only selecting correct answers among multiple choices, or evaluating given texts.
\methodf achieves the highest performance, outperforming the second-best competitor by 4.60\% and 1.72\% in the 3-bit and 4-bit experiments, respectively.
Notably, \methodf significantly improves the performance of FlexRound by utilizing \alternating even when \alternating exhibits zero accuracy.
This demonstrates that \methodf effectively unifies UQ and BCQ methods, leading to achieving previously unattainable performance.

% Notably, while most BCQ competitors demonstrate less than 1\% accuracy in almost all cases, \method consistently delivers significantly higher accuracy than them.
% In task-specific knowledge benchmarks, the distribution between sample and test datasets is more closely aligned than in general knowledge benchmarks.
% This suggests that \method’s strong fitting capability, enabled by its adaptive quantization levels, contributes to its superior performance.
% Notably, UniQuan은 BCQ 알고리즘인 Alpha-only가 매우 낮은 성능을 보임에도 이를 활용하여 FlexRound의 성능을 크게 향상시키는데 성공한다. 이를 통해 우리는 UniQuan이 UQ와 BCQ 알고리즘을 효과적으로 unify함으로써 이전에 도달하지 못했던 성능을 도달할 수 있게 해줌을 demonstrate 한다.
% In conclusion, \method resolves the optimization deficiencies seen in existing BCQ algorithms, fully utilizing the expressive power that BCQ offers, resulting in a remarkable performance on a task-specific benchmark.
% 75.44 - 73.39 = 2.05
% 67.22 - 64.67 = 2.55
% Section 4.4

\subsection{Ablation Study}
% We conduct an ablation study to show the powerfulness of our main ideas.
% We summarize the result of the ablation study in Table~\ref{tab:ablation}.
Table~\ref{tab:ablation} summarizes the result of an ablation study using a  Llama-3 8B model to show the effectiveness of our main ideas.
%
% We compare the three variants of \method:
% 	``-remapping'' without local and periodic mapping,
% 	``-unified init. ($\Theta_F$)" initializing $\{\Delta, z_U, s, s_r\} \in \Theta_F$ as $\{1, 0, \bm{1}_g, 1\}$,
% 	and ``-unified init. ($\Theta_B$)" initializing $\valpha$ and $z_B$ in $\Theta_B$ to have the uniformly spaced quantization levels.
%	in the space transformed by $\ftrans$.
%
``-remapping'' refers to the case where we do not apply local and periodic mapping, thus mappings are not updated during optimization.
``-unified init. ($\Theta_F$)" and ``-unified init. ($\Theta_B$)" denote the cases when we do not use unified initialization.
In ``-unified init. ($\Theta_F$)", we do not perform a grid search process to initialize quantization parameters in $\Theta_F$, i.e. we initialize $\Delta$, $z_U$, $s$, and $s_r$ in $\Theta_F$ as 1, 0, $\bm{1}_g$, and 1, respectively.
In ``-unified init. ($\Theta_B$)", we do not perform an alternating update process to initialize quantization parameters in $\Theta_B$, i.e. we initialize $\valpha$ and $z_B$ in $\Theta_B$ to have uniformly spaced quantization levels in the space transformed by $\ftrans$.
% ``-unified init. ($\Theta_F$)" denotes the result when we initialize $\Delta$, $z_U$, $s$, and $s_r$ in $\Theta_F$ as 1, 0, $\bm{1}_g$, and 1, respectively.
% Lastly, ``-unified init. ($\Theta_B$)" is another case where we do not use unified initialization, and we initialize $\valpha$ and $z_B$ in $\Theta_B$ to have the uniformly spaced quantization levels in the space transformed by $\ftrans$.
As a result, \methodf outperforms all of its variants, showing the effectiveness of our main ideas.
Specifically, precisely initializing $\Theta_F$ is crucial to exploit the strong optimizability of $\Theta_F$.
%

% \input{tables/initialization}
%% ====================== Ablation ====================
\begin{table}[t]
    \centering
     \caption{
    %
    % The average of 0-shot and 5-shot MMLU benchmarks of a quantized Llama-3 8B model with and without the main ideas of \method.
    Ablation study results on MMLU benchmark.
    All ideas contribute to improving accuracies.
    }
    \setlength{\tabcolsep}{9.5pt}
    \label{tab:ablation}
    \centering
    \begin{tabular}{lcc}
    \toprule
    \textbf{Method} & \textbf{4bit} & \textbf{3bit}\\
    \midrule
    % \rowcolor[HTML]{EAFDE3}
    \textbf{\methodf} & \textbf{61.43} & \textbf{53.46}     \\
    - remapping  & 56.89 & 47.83       \\
    - unified init. ($\Theta_F$)  & 24.70 & 22.87   \\
    - unified init. ($\Theta_B$)  & \underline{60.84} & \underline{52.14} \\
    \bottomrule
    \end{tabular}
    % \vspace{1mm}
\end{table}
  \setlength{\tabcolsep}{6pt}

%   \begin{minipage}{0.47\textwidth}
%     \setlength{\tabcolsep}{6pt}

%     \caption{
%     \green{
%     The average of 0-shot and 5-shot MMLU benchmarks of a quantized Llama-3 8B model with and without the main ideas of \method.
%     All	ideas contribute to the improved performance.
%     }
%     \renewcommand{\arraystretch}{0.95}
%     \setlength{\tabcolsep}{9.5pt}
%     \label{tab:ablation}
%     \centering
%     \begin{tabular}{lcc}
%     \toprule
%     \textbf{Method} & \textbf{4bit} & \textbf{3bit}\\
%     \midrule
%     % \rowcolor[HTML]{EAFDE3}
%     \textbf{\methodf} & \textbf{61.44} & \textbf{53.46}     \\
%     - remapping  & 56.89 & 47.83       \\
%     - unified init. ($\Theta_F$)  & 24.70 & 22.87   \\
%     - unified init. ($\Theta_B$)  & \underline{60.84} & \underline{52.14} \\
%     \bottomrule
%     \end{tabular}
%       \end{minipage} % \vspace{-4mm}
%     
%======================================================

% Section 4.5
\subsection{Case Study}
\label{subsec:anaylsis}
%Thus far, we have validated the superiority of \methodf on the benchmark scores of quantized models.
%In this section,
%
We conduct a case study on a Llama-3 8B model quantized with \methodf to verify whether the intended mechanisms function properly.

\smallsection{Expressiveness}
Figure~\ref{fig:anal}(a) visualizes the distribution of weights (green bars) within a weight group and the assigned quantization levels (red triangles) following \methodf.
As shown in the figure, \methodf assigns dense quantization levels near zero, where most weights are concentrated, and sparse levels in regions with fewer weights.
This demonstrates that \methodf effectively learns non-linear quantization levels aligned to the weight distribution, harnessing the strong expressiveness of BCQ scheme.

\smallsection{Optimizability}
Figure~\ref{fig:anal}(b) shows the changes of reconstruction error during optimization for a block when using FlexRound (green), \alternatings (brown), and \methodf (blue); \alternatings is a modification of \alternating that optimizes $\valpha$ using a block-wise output reconstruction process in \methodf devised for comparing optimizability.
Compared to the baselines, \methodf achieves the least error at the end with a steep decrease at the early stage of optimization.
This demonstrates that \methodf effectively minimizes the reconstruction error, harnessing the strong optimizability of UQ scheme.

In summary, \methodf successfully unifies UQ and BCQ schemes, harnessing their strong expressiveness and optimizability, as we intended.
We further verify that \methodf effectively leverages flexible mapping, the main optimization technique of FlexRoud, in Appendix~\ref{subsec:app:flex}.
% We further analyze the occursion of flexible mapping, the main idea of FlexRound, to verify that \methodf effectively leverages the optimization techniques in Appendix~\ref{subsec:app:flex}.

%============================= Anaylsis =====================================
\begin{figure}[t]
    \centering
    \includegraphics[width=\columnwidth]{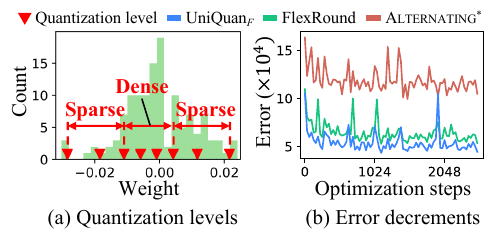}
    % \vspace{-6mm}
    \caption{
    (a) Quantization levels learned by \methodf and (b) changes of reconstruction error during optimization.
    \methodf assigns quantization levels that align closely with the weight distribution, and effectively reduces the reconstruction error during optimization.
    }
    \label{fig:anal}
    % \vspace{-2mm}
\end{figure}
%============================= Anaylsis ===================================== 

\section{Related Work}
\label{sec:related}
%\vspace{-1mm}
% \green{
% As LLMs are computationally intensive, accelerating inference is vital for practical usage.
\smallsection{Quantization}
Quantization~\citep{awq,quarot,sensimix,spinquant} has gained great attention since it effectively speeds up LLM inference by minimizing the cost of memory operations.
The majority of quantization approaches rely on uniform quantization (UQ)~\citep{omni, flexround} due to its hardware-friendly nature, benefiting from pre-existing acceleration kernels.
%% QuaRot, SpinQuant 등은 orthogonal하기 때문에 adaptable함
Some works~\cite{quarot,spinquant} even quantize activation by mitigating outliers through weight rotation, which is orthogonal to our approach.
Other approaches, such as vector~\citep{SqueezeLLM, quipsharp, vptq} and additive quantization~\citep{aqlm} suffer from slow inference since they lack of acceleration kernels.
% , and rule-based quantization~\citep{PowerQuant}
% and depend on hard-to-implement custom solutions.
Binary-coding quantization (BCQ)~\citep{xu, alphatuning} also has been underexplored due to its missing kernels even with broader expressiveness over UQ (see Section~\ref{subsec:BCQ}).
However, recent studies~\citep{lut_gemm, shiftadd} have proposed kernels leveraging look-up tables, enabling BCQ to achieve the same speed as UQ.
Taking advantage of them, \method unlocks the superior expressiveness of BCQ by unifying it with the UQ scheme.

\smallsection{Other compression methods}
Pruning~\cite{shortgpt,sparsegpt,llmpruner,sprint,kprune,blockpruner} and knowledge distillation~\cite{minillm, minilm, pet,falcon,auber,peakd,kegnet,jhkim21} are compatible techniques that improve the accuracy of compressed models when used with quantization~\cite{park_survey}.
Pruning improves the accuracy of quantized models by allowing us to utilize more bit-widths for the necessary weights under memory constraints through explicitly removing unnecessary weights.
Knowledge distillation boosts the accuracy of quantized models by transferring the useful knowledge of uncompressed models to quantized models during quantization.

%% ICLR ver.
% Despite BCQ's~\citep{alphatuning,xu} strong expressive power, it has been under-investigated to uniform
% quantization~\citep{awq,quarot,spinquant}, which has a weaker expressive capability, mainly due to the lack of accelerated kernels for BCQ. To the best of our knowledge, there has been no research on accurately compressing LLMs using the BCQ scheme.
% In this context, the most important research related to BCQ is the development of the LUT-GEMM kernel~\citep{lut_gemm}, which enables BCQ and UQ to be accelerated at the same speed.
% LUT-GEMM leverages the fact that BCQ’s binary code is composed of only 1s and -1s, resulting in repetitive operations regardless of the weight values.
% LUT-GEMM enables fast general matrix multiplication (GEMM) by referencing the look-up table (LUT) constructed based on this observation.
% LUT-GEMM has demonstrated that it performs inference with the OPT-175B~\citep{opt} model using BCQ faster than OPTQ~\citep{optq} and AWQ~\citep{awq}, which support the acceleration of UQ, under small batch size constraints.
% Although LUT-GEMM has demonstrated excellent performance under constrained environments, subsequent works, such as~\cite{shiftadd}, indicate that fast kernels supporting BCQ are evolving.
% Precise BCQ algorithms, such as \method, are crucial since they boost the studies of fast inference kernels for BCQ.
% Consequently, research focused on improving the accuracy of BCQ, which possesses superior expressive power, has become a highly crucial topic. 
\vspace{1mm}
\section{Conclusion}
\label{sec:conclusion}
\vspace{1mm}
We propose \methodf, an accurate quantization method for LLMs.
We unify the best-performing uniform quantization (UQ) and binary-coding quantization (BCQ) methods to leverage their strong optimizability and expressiveness concurrently.
We propose local and periodic mapping, unified initialization, and unification theorem to improve the accuracies of quantized models without introducing additional memory and computational costs at deployment. 
% We demonstrate the performance of \method by designing \methodf which unifies FlexRound and Alpha-only, the best-performing UQ and BCQ algorithms, respectively.
As a result, \methodf achieves the best performance, demonstrating the effectiveness of the unification.
Incorporating diverse UQ methods into BCQ, and integrating UQ methods with other non-uniform quantization schemes beyond BCQ are our promising future works. 

% In this paper, we propose \method, an accurate optimization algorithm for binary-coding quantization (BCQ).
% Our motivation is to integrate uniform quantization (UQ)'s advanced optimization technique into BCQ's powerful expressive capability.
% We find that the UQ's advanced optimization techniques stem from its transformation process and propose a novel formulation that utilizes the UQ's transformation process for BCQ without latency overhead.
% % We optimize the quantization parameter of \method with a blockwise output reconstruction.
% % We propose unified initialization, gradient filtering, and periodic remapping techniques to improve the performance of the optimization process of the quantization parameters in \method.
% Combined with our effective initialization and optimization techniques, \method shows 3.24\% higher accuracy than existing UQ and BCQ algorithms
% % on MMLU 5-shot benchmark using Llama-3 70B model.
% on MMLU 5-shot benchmark when quantizing Llama-3 70B into 3 bits. 

\newpage
% $\;$
% \newpage

\section*{Acknowledgements}
This work was supported by Institute of Information \& communications Technology Planning \& Evaluation (IITP) grant funded by the Korea government (MSIT)
[No.RS-2020-II200894, Flexible and Efficient Model Compression Method for Various Applications and Environments],
[No.RS-2021-II211343, Artificial Intelligence Graduate School Program (Seoul National University)],
and [No.RS-2021-II212068, Artificial Intelligence Innovation Hub (Artificial Intelligence Institute, Seoul National University)].
This work was supported by Youlchon Foundation.
The Institute of Engineering Research at Seoul National University provided research facilities for this work.
The ICT at Seoul National University provides research facilities for this study.
U Kang is the corresponding author.
This work was partly done while Seungcheol Park was an intern at NAVER Cloud. This work was improved by the dedicated input and collaboration of researchers from NAVER Cloud.

\section*{Limitations}
\label{sec:limitation}

In this paper, we define \method which is a quantization scheme that unifies uniform quantization (UQ) and binary-coding quantization (BCQ) schemes.
We verify the effectiveness of \method with only \methodf which unifies the best-performing UQ and BCQ methods.
We cover the unification of extended quantization methods in Appendix~\ref{app:sec:extension} in theoretical aspects.
The experimental validation of diverse pairs of quantization methods in extended quantization schemes is one of our promising future works.
%
% 본 논문에서 우리는 uniform quantization과 binary-coding quantization을 unify할 수 있는 일반적인 framework인 UniQuan을 제안합니다. 그리고 이를 활용하여 가장 우수한 UQ 알고리즘인 FlexRound와 BCQ 알고리즘인 Alpha-only를 합친 UniQuanF를 제안하고, 실험을 통해 이의 우수함을 보여줍니다.

% 우리는 다양한 조합의 quantization methods들간의 unification에 대해 Appendix에 대해 다루고 있으며, 이들에 대한 실험적인 검증 및 융합 가능한 quantization scheme들을 확장하는 것은 우리의 유마한 미래 일 중 하나입니다.

% We report detailed hyperparameter settings including learning rate, learning rate scheduler, batch size, the number of epochs, the number of iterations for unified initialization, and the remapping period in Section~\ref{subsec:setup}, and another implementation detail regarding BCQ's mapping function in Appendix~\ref{app:impl} to promote the reproducibility of the experimental results.
% We include the proof of Composition Theorem in Section~\ref{app:proof} in Appendix for completeness.
% We provide an in-depth analysis of quantized models generated by \method that can be used for examining reproduced results with figures and a table using the additional 10th page.

% Bibliography entries for the entire Anthology, followed by custom entries
%\bibliography{anthology,custom}
% Custom bibliography entries only
\bibliography{reference}

\newpage

\appendix

% =============================
%       A. Terminologies
% =============================
\section{Terminology}
\label{app:term}
We summarize the definitions of terminologies frequently used in this paper to promote clarity.

\subsection{Units in LLMs}
We summarize the definitions of the units in Large Language Models (LLMs) from a weight to a model.
Figure~\ref{fig:trans} illustrates an example of a Transformer-based LLM with $N$ blocks.

\noindent

\begin{figure}[h]
  \centering
  \includegraphics[width=0.55\linewidth]{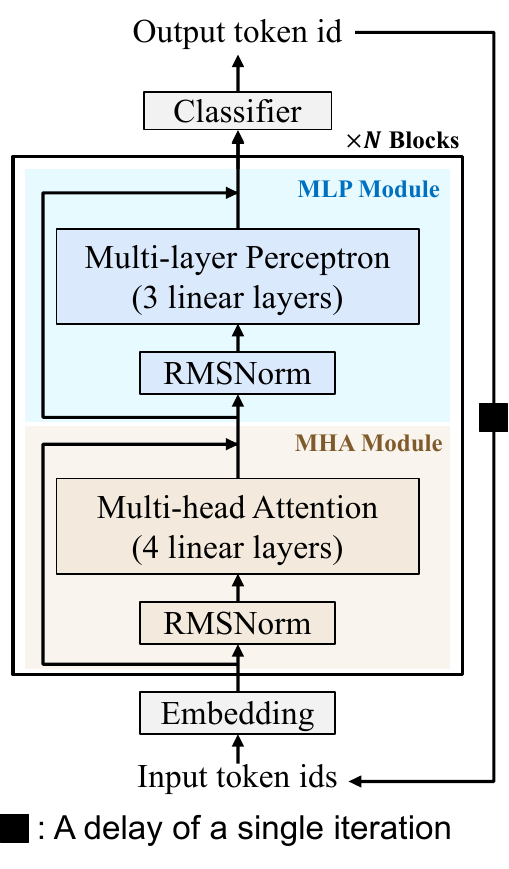} % Replace with your image
  % \vspace{-2mm}
  \captionof{figure}{An illustration of a Transformer architecture with $L$ blocks.}
  \label{fig:trans}
\end{figure}
%\vspace{-2mm}
% \begin{minipage}{0.6\textwidth}
\begin{itemize}[leftmargin=3mm, itemsep=0mm, topsep=0mm]
    \item \textbf{Weight:} the smallest unit, representing an individual numerical weight value. The quantization bit width represents the number of bits to represent each weight.
    \item \textbf{Weight group:} a collection of weights grouped by a specified group size, all of which share the same quantization parameters.
    It is easy to quantize models when we have small group sizes since we have plenty of quantization parameters.
    \item \textbf{Weight matrix:} a two-dimensional matrix composed of weights, containing multiple weight groups.
    \item \textbf{Layer:} a component that performs affine transformations with a weight matrix and a bias vector.
    \item \textbf{Module:} a collection of layers that performs a specific functionality. In Transformers, modules include Multi-Head Attention (MHA) and Multi-Layer Perceptron (MLP).
    \item \textbf{Block:} a fundamental unit of a Transformer, consisting of one MHA module and one MLP module.
    FlexRound~\cite{flexround} and \methodf sequentially quantize each block from the bottom to top to reduce the cost of quantization.
    \item \textbf{Model:} a complete language model consisting of multiple blocks. An LLM refers to a model.
\end{itemize}

% \end{minipage}

% \hfill
% \begin{minipage}{0.3\textwidth}
%   \centering
%   \includegraphics[width=0.9\linewidth]{figures/A_transformer.pdf} % Replace with your image
%   \captionof{figure}{An illustration of a Transformer architecture with $L$ blocks.}
%   \label{fig:trans}
% \end{minipage}

\subsection{Error Types}
In this paper, we mention two types of errors: quantization error and output reconstruction error.
Quantization error represents the error before and after quantization at the weight level without considering the model's input.
If we quantize a weight group $\vw$ into $\widehat{\vw}$, then the quantization error is $|| \vw - \widehat{\vw} ||_2^2$.
On the other hand, output reconstruction error measures how the quantized model's outputs are close to those before quantization.
We measure the reconstruction error at block level.
Assume that we have a block $f$ with parameters $\Phi$,
quantized into $\widehat{f}$ with parameters $\widehat{\Phi}$.
The reconstruction error is $||f(\mX;\Phi) - \widehat{f}(\widehat{\mX};\widehat{\Phi})||_F^2$, where $\mX$ and $\widehat{\mX}$ are the inputs of the block $f$ and the quantized block $\widehat{f}$, respectively.

We first initialize the quantization parameters to minimize quantization errors at the weight level. %, as computing output reconstruction error is too expensive.
Then, we optimize the quantization parameters to minimize the output reconstruction errors at the block level to take account of the input distribution.
\setlength{\tabcolsep}{5pt}
\begin{table}[t!]
\centering
\caption{
	Symbols and their definitions.
}
\label{tab:symbols}
\renewcommand{\arraystretch}{0.9}
\resizebox{\linewidth}{!}{
\begin{tabular}{cl}
\toprule
\textbf{Symbol} & \textbf{Definition}  \\
\midrule
$w$ & Weight \\
$\vw$ & Group of weights \\
$\bar{\vw}$ & Group of transformed weights \\
$\widetilde{\vw}$ & Group of mapped weights \\
$\widehat{\vw}$  & Group of quantized weights \\
$\Phi$ & Set of weight groups \\
$g$ & The size of a weight group \\
$k$ & Quantization bit-width \\
$f$ & Transformer block \\
$|\sD|$ & Number of data points \\

\midrule

$\bm{1}_g$ & Vector of size $g$ filled with ones \\
$\lfloor \cdot \rceil$ & Rounding function \\
$Clip(\cdot,m,M)$ & Clipping function with range $[m, M]$ \\
$\oslash$ & Element-wise division \\

\midrule

$\Delta$  & Scale factor of UQ \\
$z_U$ & Zero-point of UQ \\

$\mC$ & Binary-code matrix of BCQ \\
$\valpha$  & Scale factors of BCQ \\
$z_B$ & Shifting factor of BCQ \\

$\vs$ & Element-wise scale factor of FlexRound \\
$s_r$ & Row-wise scale factor of FlexRound \\

$G$ & Grid search iterations \\
$T$ & Alternating update iterations \\
$p$ & Remapping period \\

\bottomrule
\end{tabular}
}
%\vspace{2mm}
\end{table} 
\begin{table*}[t]
    \centering
    \caption{
    A summary of terminologies regarding different quantization methods.
    $\Rightarrow$ represents the application of the unification theorem (see Theorem~\ref{thm:unification}).
    \method's quantization parameters $\Theta_U$, transformation function $\trans_U$ and detransformation function $\detrans_U$ are determined according to the UQ method unified into the BCQ scheme. 
    }
    \label{tab:symbols_quant}
    \renewcommand{\arraystretch}{1.1}
    \resizebox{\linewidth}{!}{
    \begin{tabular}{ccccccc}
    \toprule
    \multirow{2}{*}{\textbf{Method}} & \multirow{2}{*}{\textbf{Quantizer}} & \textbf{Quantization} & \textbf{Transformation} & \textbf{Mapping} & \textbf{Detransformation} & \textbf{Reconstruction} \\
     & & \textbf{Parameters} & \textbf{Function} & \textbf{Function} & \textbf{Function} & \textbf{Function} \\ \midrule

    General form & $Q$ & $\Theta$ & $\trans(\vw; \Theta)$ & $\map(\vw; \Theta)$ & $\detrans(\vw; \Theta)$ & $\recon(\mC;\Theta)$ \\
    \midrule

    RTN & $Q_R$ & $\Theta_R =\{\Delta, z_{U}\} $ & $\trans_R(\vw; \Theta_R)$ & \multirow{2}{*}{$\umap(\vw,k)$} & \multirow{2}{*}{$\detrans_R(\vw; \Theta_R)$}  & \multirow{2}{*}{-} \\
    FlexRound & $Q_F$ & $\Theta_{F}=\{\Delta, z_U, \vs, s_r\}$ & $\ftrans(\vw; \Theta_F)$ & &  & \\

    \midrule

    \alternating & $Q_B$ & $\Theta_B=\{\valpha, z_{B}\}$ & $\btrans(\vw)$ & $\gmap(\vw; \Theta_B)$ & $\bdetrans(\vw)$ &
    $\recon(\mC;\Theta_B)$ \\
    % Alpha-only & & & & & & \\

    \midrule

    \method & $Q_I$ &
    $\Theta_I = \Theta_U \cup \Theta_B \Rightarrow \Theta^*_B$ &
    $\trans_U(\vw; \Theta_U)$ &
    \multirow{2}{*}{$\lpgmap(\vw; \Theta_B)$} &
    $\detrans_U(\vw; \Theta_U)$ &
    \multirow{2}{*}{$\recon(\mC;\Theta^*_B)$} \\

    \methodf & $Q_{I_F}$ &
    $\Theta_{I_F}=\Theta_F \cup \Theta_B \Rightarrow \Theta^*_B$ &
    $\ftrans(\vw;\Theta_F)$ &
     &
    $\fdetrans(\vw;\Theta_F)$ &
     \\

    \bottomrule
    \end{tabular}
    }
    \vspace{3mm}
\end{table*}

\subsection{Symbols and Definitions}
\label{app:symbols}
 We summarize the frequently used symbols and their definitions in Tables~\ref{tab:symbols} and~\ref{tab:symbols_quant}.
%
% \input{tables/A_LUT_GEMM}

%$\;$
%\newpage
%$\;$
%\newpage
%$\;$
%\newpage
%$\;$
%\newpage
% $\;$
% \newpage
% % ==========================================
% %       B. LUT-GEMM
% % ===========================================

\section{Inference Speed of BCQ Kernels} \label{app:lutgemm}
In this paper, we propose \method which unifies UQ~\cite{flexround,awq,omni,zsqsurvey,synq} and BCQ~\cite{xu,alphatuning} schemes, and the resulting models are represented in a BCQ scheme.
The effectiveness of \method depends on the inference speed of BCQ kernels~\cite{lut_gemm,shiftadd}.
Thus, we summarize the inference speed of LUT-GEMM~\cite{lut_gemm}, the state-of-the-art BCQ kernel, for completeness.

\setlength{\tabcolsep}{5pt}
\begin{table}[ht!]
    \centering
    \caption{Latency comparison of the first FFN layer on OPT-175B model with various precision and kernels on A100-80GB-GPU.}
    \label{tab:A_lg_1}
    \begin{tabular}{cccc}
    \toprule
    Kernel & Schemes & \# Bits & Latency (ms)  \\
    \midrule
    cuBLAS & - & 16 & 0.7256 \\
    GPTQ & UQ & 3 & 0.3599  \\
    AWQ & UQ & 4 & 0.3238  \\
    LUT-GEMM & UQ, BCQ & 4 & 0.2688  \\
    LUT-GEMM & UQ, BCQ & 3 & 0.2250  \\
    \bottomrule
    \end{tabular}
\end{table}

Table~\ref{tab:A_lg_1} summarizes the main results of Table 1 in LUT-GEMM~\cite{lut_gemm}.
In this table, latency represents the time for inferencing the first FFN layer in OPT-175B~\cite{opt} with various precision, and LUT-GEMM shows faster inference speeds than GPTQ~\cite{optq} and AWQ~\cite{awq} kernels, which support only a UQ scheme.

\begin{table}[t!]
    \centering
    \caption{Comparison of end-to-end latency per token for OPT-30B models on a A100-80GB-GPU.}
    \label{tab:A_lg_2}
    \begin{tabular}{ccccc}
    \toprule
    Model & Kernel-$k$-$g$ & Latency (ms) \\
    % \cmidrule(l){3-5}
    %  & & 1-GPU & 2-GPU & 4-GPU \\
    \midrule
    \multirow{5}{*}{\shortstack{OPT \\ -30B}} & cuBLAS-16-N/A & 40.5 \\
     & LUT-GEMM-4-32 & 18.5 \\
     & LUT-GEMM-4-64 & 17.8 \\
     & LUT-GEMM-3-32 & 16.7 \\
     & LUT-GEMM-3-64 & 15.7 \\
    \bottomrule
    \end{tabular}
\end{table}
\setlength{\tabcolsep}{6pt}

Table~\ref{tab:A_lg_2} summarizes the main results of Table 6 in LUT-GEMM~\cite{lut_gemm}, reporting the end-to-end latency per token for OPT family models.
$k$ and $g$ represent the bit width and group size, respectively.
As summarized in the table, LUT-GEMM provides end-to-end inference speedup with diverse bit-widths and group sizes.

In summary, LUT-GEMM provides faster inference speed for quantized models than existing UQ kernels and speeds up end-to-end inference.
Therefore, \methodf, which is supported by LUT-GEMM, is an essential method for quantizing LLMs.
Note that we need to convert quantized models using a UQ scheme into a BCQ scheme to use LUT-GEMM which provides a faster inference speed than existing UQ kernels as shown in Table~\ref{tab:A_lg_1}.
Therefore, quantized models in both schemes require the same memory and computational costs when they have the same bit width.

% % ==========================================
% %       B. Implementation Details
% % ===========================================

\section{Implementation Details} \label{app:impl}
We use PyTorch~\cite{pytorch} and Hugging Face~\cite{hf} libraries for implementation.
We use the pretrained weights of Llama-3~\cite{llama3} and Mistral~\cite{mistral} models from the Hugging Face library.
We discuss the implementation details of \methodf and competitors to reproduce the performance reported in this paper.
%
% We discuss the implementation details necessary to reproduce the performance reported in this paper.
% We first summarize the hyperparameter settings used in the main text.
% Then we summarize the performance variations of \methodf, emphasizing the importance of implementing the mapping function, along with the impacts of the remapping period, Gradient Filtering threshold, optimization epochs, sample dataset, and clipping strategy.

\subsection{Implementation Details of \methodf}\label{app:subsec:hyps}
% \subsubsection{Hyperparameter Settings}
\textbf{Hyperparameter settings.}
Our objective is to demonstrate that the outstanding performance of \methodf is achieved without expensive hyperparameter tuning although \methodf employs a block-wise output reconstruction process that requires many hyperparameters.
To this end, we fix all hyperparameters except for the grid size $G$ for unified initialization and utilize only two combinations of hyperparameters across all cases.
We outline these combinations in Table~\ref{tab:a_hyps}.

\setlength{\tabcolsep}{2pt}
\begin{table}[t!]
    \centering
    \caption{Hyperparameter settings of \methodf}\label{tab:a_hyps}
    \begin{tabular}{cc}
    \toprule
    \textbf{Hyperparameter} & \textbf{Setting} \\ \midrule
       Learning rate for $\Theta_F$ & 0.005 \\
       Learning rate for $\Theta_B$ & 0.0005 \\
       Grid search iterations ($G$) & 1, 30 \\
       Alternating update iterations ($T$) & 15 \\
       Remapping Period ($p$) & 2 \\
       Epochs & 20 \\
       Batch size & 1 \\
       Clipping strategy & Fixed-minimum
       \\ \bottomrule
       % \multicolumn{2}{l}{* Minimum value among $\valpha_{(k)}$ for each weight group.}\\
    \end{tabular}
\end{table}
\setlength{\tabcolsep}{6pt}

\textbf{Gradient filtering.} %\label{subsec:filtering}
We utilize a straight-through estimator (STE)~\cite{ste} to update \methodf's quantization parameters since its mapping function $\lpgmap$ is not differentiable.
We filter the gradient of the weights that have large mapping errors $|\bar{w}-\widetilde{w}|$ to stabilize the optimization process of \methodf, where $\bar{w}$ and $\widetilde{w}$ represent transformed and mapped weights, respectively.
STE hypothesizes that the gradient of a transformed weight $\bar{w}$ and mapped weight $\widetilde{w}$ have the same gradients.
However, if the difference between transformed and mapped weight is significant, the hypothesis does not hold which degrades the accuracy of quantized models.
Therefore, we set the hyperparameter $\tau$ as a gradient filtering threshold, and zero out the gradients of weights whose mapping error is larger than $\tau$.

We mimic the quantization process in the UQ scheme to determine $\tau$.
In UQ, weights within the clipping range $[w_{m,c}, w_{M,c}]$ are transformed to the values in the range $[0, 2^{k-1}]$, while weights outside the clipping range, such as $w_m$ and $w_M$, are transformed into values outside the range $[0, 2^{k-1}]$.
These out-of-range weights cause significant mapping errors.
In the transformed space, the interval between quantization levels is 1, and each level has a range of 0.5 on either side.
Thus, it is reasonable to allow a margin of 0.5 even for the smallest and largest quantization levels $0$ and $2^{k-1}$.

For \methodf, every quantization level has a potential risk of significant mapping errors, and the value of 0.5 is replaced with the smallest scale factor $min(\valpha)$ which is the smallest width in the binary tree constructed by BCQ's quantization levels (see Figure~\ref{fig:levels} (b)).
Therefore, we set the gradient filtering threshold $\tau$ as $min{(\valpha)}$.
%
% Therefore, we zero out the gradient of weights which have a mapping error larger than $min(\valpha)$, the smallest width in the binary tree constructed by BCQ's quantization levels.
%\newpage
\subsection{Implementation Details of Competitors}\label{app:subsec:comp_detail}
% \green{
\textbf{RTN~\citep{rtn}.}
We search the clipping range with 100 iterations, which is larger than the number of iterations used in unified initialization.
We execute the same source code for clipping range search in \methodf.

\textbf{\alternating~\cite{xu}.}
We implement \alternating based on the original paper~\citep{xu}.
We use an alternating update with 15 iterations which is the same as the number of iterations used in unified initialization.
We use the same source code for the alternating update in \methodf.

\textbf{FlexRound~\citep{flexround}.}
We implement the FlexRound following the original paper~\citep{flexround}.
%According to the similar quantization and optimization processes of FlexRound and \methodf, we implement FlexRound on top of our implementation of \methodf besides its quantizer.
% For fair comparison, we report results under same hyperparameter settings, e.g. learning rate.

\textbf{OmniQuant~\citep{omni}.}
We refer to the official implementation\footnotemark[2] of OmniQuant and report the results following the best hyperparameter settings reported in the paper.

% \footnotetext[1]{https://github.com/OpenGVLab/OmniQuant}

% \textbf{Alpha-only.}
% We develop Alpha-only as a baseline quantization method for BCQ schemes since there are no accurate BCQ methods for LLMs.
% We initialize its quantization parameters following alternating update from \citet{xu} and optimize them using a block-wise output reconstruction process as in FlexRound~\cite{flexround} and \methodf.
% We do not update the mappings between weights and quantization levels during optimization since BCQ's mapping function in existing works~\cite{xu,alphatuning} is too slow to iteratively update in the optimization process.
% We implement Alpha-only from the same source code with \methodf.
% }

\section{Evaluation Protocol}
We report the average performance using random seeds 0, 1, and 2 except for Llama-3 70B which uses only 0 because of its long quantization time.
We sample 128 token sequences of length 2048 from C4~\cite{c4} and GSM8K~\cite{gsm8k} for general and task-specific knowledge evaluations, respectively.
We detail the evaluation protocols for general and task-specific knowledge evaluation as follows.

\textbf{General knowledge evaluation.}
We use MMLU~\cite{mmlu} and WikiText2~\cite{wiki} benchmarks for general knowledge evaluation.
MMLU consists of multiple-choice problems across 57 subjects.
We evaluate the 0-shot and 5-shot accuracies on MMLU to evaluate the amount of general knowledge in the quantized models.
0-shot and 5-shot refer to settings with 0 and 5 examples provided in the prompt, respectively.
% 0-shot and 5-shot represent the settings where no examples are provided and 5 examples are provided in the prompt, respectively.
We follow the evaluation protocol in the official code repository\footnote{\url{https://github.com/hendrycks/test}} for the MMLU benchmark.
% We report the average accuracy of 0-shot and 5-shot MMLU benchmarks.
%
% We use WikiText2~\cite{wiki} for evaluating the language modeling performance of quantized models.
WikiText2 consists of tokens within a set of verified articles from Wikipedia.
We report the perplexity of quantized models on WikiText2 benchmark to evaluate the language modeling capabilities of quantized models.
We follow the evaluation protocol used in OmniQuant~\cite{omni} according to its official implementation\footnote{\url{https://github.com/OpenGVLab/OmniQuant/tree/main}}.

\textbf{Task-specific knowledge evaluation.}
We use GSM8K~\cite{gsm8k} benchmark for task-specific knowledge evaluation.
GSM8K consists of high-quality grade school math problems that require multi-step reasoning,
and we use GSM8K to evaluate the amount of Mathematical knowledge in the quantized models.
We implement our evaluation code using the language model evaluation harness~\cite{eval-harness} package\footnote{\url{https://github.com/EleutherAI/lm-evaluation-harness/tree/main/lm_eval/tasks/gsm8k}}.

We summarize the properties of benchmarks in Table~\ref{tab:a_data}.

\begin{table}[h]
    \centering
    \caption{Properties of benchmarks}\label{tab:a_data}
    \begin{tabular}{cccc}
    \toprule
    \textbf{Benchmark} & \textbf{Subject} & \textbf{Instance} & \textbf{Metric} \\ \midrule
       MMLU & General & 14,042 & Accuracy \\
       WikiText2 & General & 141 & Perplexity \\
       GSM8K & Math & 1,319 & Accuracy
       \\ \bottomrule
       % \multicolumn{2}{l}{* Minimum value among $\valpha_{(k)}$ for each weight group.}\\
    \end{tabular}
\end{table}

% % ==========================================
% %       E. Theoretical Details of \method
% % ==========================================
%\newpage

\section{Theoretical Details}
We explain the theoretical details on clipping strategy, general alternating update, local and periodic mapping, unification theorem, and extensibility of \method.

\subsection{Algorithm of Unified Initialization}\label{subsec:A_ui}
% ============ Unified Initialization ==================
%%% ==== Algorithmic ====
\begin{algorithm}[t!]
    \caption{Unified Initialization}
    \label{alg:ui}
    \begin{algorithmic}[1]
        \item[\textbf{Input:}] %{\bfseries Input:}
        Weights $\vw$, grid search iterations $G$, alternating update iterations $T$, and a bit-width $k$
        % A weight group $\vw$, a bit-width $k$. Numbers $G$ and $T$ of iterations for grid search and alternating update, respectively.
        \item[\textbf{Output:}] %{\bfseries Output:}
        Initialized quantization parameters $\Theta_{I_F}$
        % Initialized UQ's scale factor $\Delta$, zero-point $z_{U}$, a vector $\valpha$ of BCQ's scale factors, and a shifting factor $z_B$
        % \vspace{1.5mm}
        \STATE $w_m, w_M \gets min(\vw), max(\vw)$
        % \STATE $w_M\gets max(,0)$
        % \STATE $z_{B} \gets (2^{k}-1)/2$, $s \gets 1$, $s_r \gets 1$,
        % \STATE $e^* \gets$ MAX\_NUM
        \STATE $z_B$, $\vs$, $s_r$, $e$ $\gets$ $(2^{k}-1)/2$, $\bm{1}_g$, 1, MAX\_NUM
        \vspace{-5mm}
        % \STATE $\Delta,z_{U} \gets (w_M-w_m) / (2^{k}-1),-w_m /\Delta$
        % $\gets $
        \FOR{$\gamma$ in $1/G, 2/G, ..., 1$}
        % \STATE $\Delta', z'_U \gets \gamma(w_M- w_m) / (2^{k}-1),-w_m /\Delta'$
        \STATE $\Delta'$, $z'_U$ $\gets$ adjust-clipping($w_m$,$w_M$,$\gamma$,$k$)
        \\ \hfill $\triangleright$ Appendix~\ref{subsec:A_clip}
        % \vspace{-5mm}
        % $$
        %
        \STATE $\bar{\vw}\gets\mathcal{T}_F(\vw;\Delta', z'_{U}, \vs, s_r)$
        \hfill  $\triangleright$ Equation~\ref{eq:ftrans}
        % \Comment{\ref{eq:flexround}}
        %
        \STATE $\valpha'$, $z'_B$ $\gets$general-alternating($\bar{\vw}$, $z_B$, $G$, $T$)
        % \\ \hfill $\triangleright$ Algorithm~\ref{alg:alt}
        \\ \hfill  $\triangleright$ Algorithm~\ref{alg:alt} in Appendix~\ref{sec:gau}
        \STATE $\Theta'_{I_F}$ $\gets$ $\{\Delta', z'_U, \vs, s_r, \valpha', z'_B\}$
        \STATE $\widehat{\vw}$ $\gets$ $Q^*_{I_F}(\vw;\Theta'_{I_F})$
         \hfill $\triangleright$ Equation~\ref{eq:qif}
        % \Comment{\ref{eq:flexbcq}}
        %
        \STATE $e'\gets || \vw-\widehat{\vw} ||_2^2$
        \hfill $\triangleright$ Quantization error
        \IF{$e'<e$}
            \STATE $\Delta$, $z_{U}$, $\valpha$, $z_B$, $e$
            $\gets$ $\Delta'$, $z'_{U}$, $\valpha'$, $z'_B$, $e'$
            % \Comment{Update quantization parameters}
            %
            % \STATE $e \gets e'$
            % \Comment{Update the minimum quantization error}
        \ENDIF
        \ENDFOR
        \STATE $\Theta_{I_F}$ $\gets$ $\{\Delta, z_U, \vs, s_r, \valpha, z_B\}$
    \end{algorithmic}
\end{algorithm}
% \vspace{-10mm} 
% ======================================================
Algorithm~\ref{alg:ui} outlines the overall process of unified initialization which initializes the quantization parameters in the set $\Theta_{I_F}=\{ \Delta, z_U, \vs, s_r, \valpha, z_B \}$ for each weight group.
Following FlexRound~\cite{flexround}, we initialize $\vs$ and $s_r$ as $\bm{1}_g$ and 1, respectively.
We initialize the center $z_B$ of BCQ's quantization levels to $(2^{k}-1)/2$ since $\ftrans$ transforms weights $\vw$ into the range $[0, 2^{k}-1]$ (line 2).
Then, we conduct a grid search by adjusting the scale ratio $\gamma$
to find the optimal quantization parameters (lines 3-13).
% \red{
We explore diverse values for the center of BCQ's quantization levels in the original weight space, by adjusting candidate scale factor $\Delta'$ and candidate zero-point $z'_U$ according to $\gamma$;
the weight transformed into $z_B$ changes as the adjusted $\Delta'$ and $z_U'$ modify the transformation function $\ftrans$.
There are Fixed-minimum, Fixed-maximum, and Balanced strategies for adjusting $\Delta'$ and $z'_U$.
The selection of a clipping strategy is a hyperparameter and we detail those strategies in Appendix~\ref{subsec:A_clip} (line 4).
Then, we transform $\vw$ into $\bar{\vw}$ using Equation~\ref{eq:ftrans} (line 5).
We find $\alpha'$ and $z'_B$ through a general alternating update in Algorithm~\ref{alg:alt} which is the improved version of alternating update~\cite{xu} to find $z_B$ (line 6).
After obtaining the candidate quantization parameters in $\Theta'_{I_F}$, we quantize $\vw$ into $\widehat{\vw}$ and evaluate the quantization error $e'$ (lines 7-9).
We update the quantization parameters using the candidate ones only if the evaluated error $e'$ is lower than the previous minimum error $e$ (lines 10-12).
After the grid search process over $G$ iterations, we obtain the initialized quantization parameters (line 14).

\subsection{Details of Clipping Strategies}\label{subsec:A_clip}
%
%% We compare the performance of three clipping strategies in unified initialization.
%The unified quantization process used in unified initialization consists of the following three steps: (1) transforming the weights, (2) mapping the transformed weights to quantization levels, and (3) detransforming the mapped weights to the original weight space.
%In step (1), we perform an affine transformation process which changes the clipping range [$w_{m,c}$, $w_{M,c}$] to the range [$0$, $2^k-1$] in the transformed space, where $w_{m,c}$ and $w_{M,c}$ are the minimum and maximum weight values in the clipping range, respectively.
%The quantization levels used in step (2) are distributed in the transformed space, and have a center of $(2^k-1)/2$, which corresponds to $(w_{M,c}-w_{m,c})/2$ in the original weight space.

In unified initialization, we find the optimal quantization parameters by adjusting the clipping range [$w_{m,c}$, $w_{M,c}$] from altering a hyperparameter $\gamma$.
The length of the clipping range is adjusted as $\gamma(w_M-w_m)$ by setting scale factor $\Delta'=\gamma(w_M-w_m)/(2^k-1)$, and the clipping range is shifted according to the definition of the zero-point $z_U'$.
There are three strategies to define $z_U'$ as follows:
\begin{itemize}[leftmargin=3mm, itemsep=0mm, topsep=0mm]
    \item \textbf{Fixed-minimum.} $w_m$ is fixed at $w_{m,c}$ so that the $w_m$ is always included in the clipping range.
    In this case, we find $z_U'=-w_m/\Delta'$ by solving $\ftrans(w_m)=0$.
    \item \textbf{Fixed-maximum.} $w_M$ is fixed at $w_{M,c}$ so that the $w_M$ is always included in the clipping range.
    In this case, we find $z_U'=2^k-1-w_M/\Delta'$ by solving $\ftrans(w_M)=2^k-1$.
    \item \textbf{Balanced.} $w_{m,c}$ and $w_{M,c}$ are adjusted as $\gamma w_{m}$ and $\gamma w_{M}$, respectively, so that the minimum and maximum clipped values are adjusted in balance.
    In this case, we find $z_U'=-\gamma w_m/\Delta'$ by solving $\ftrans(\gamma w_m)=0$.
\end{itemize}
% The first one is a ``Fixed-minimum" strategy in which
% The second one is a ``Fixed-maximum" strategy in which $w_M$ is fixed at $w_{M,c}$ so that the $w_M$ is always included in the clipping range.
% In this case, we find $z_U=2^k-1-w_M/\Delta'$ by solving $\ftrans(w_M)=2^k-1$.
% The last one is a ``Balanced strategy" in which $w_{m,c}$ and $w_{M,c}$ are adjusted as $\gamma w_{m}$ and $\gamma w_{M}$, respectively, so that the minimum and maximum clipped values are adjusted in balance.
The center of BCQ's quantization levels explores more diverse values during grid search when we use Fixed-maximum or Fixed-minimum strategies than Balanced strategy; the clipping range is asymmetrically adjusted in Fixed-maximum and Fixed-minimum strategies.
% Table~\ref{tab:a_clipping} summarizes the comparison of the performance of quantized models with the three clipping strategies.

% %================== Sample datasets ========================
% \begin{table}[h]
%     \centering
%     \caption{The average accuracies on 0-shot and 5-shot MMLU benchmarks of 3-bit Llama-3 8B models with various clipping strategies.} \label{tab:a_clipping}
%     \begin{tabular}{cc} \toprule
%         \textbf{Strategy} & \textbf{Accuracy} \\ \midrule
%         % 0-shot   & 42.57 & 47.23 & 50.38 & 50.26 \\
%         % 5-shot   & 52.88 & 54.19 & 56.53 & 57.45  \\ \midrule
%         Fixed-minimum & 53.46 \\ % \bottomrule
%         Fixed-maximum & 54.34 \\ % \bottomrule
%         Balanced & 53.21 \\ \bottomrule
%     \end{tabular}
%     % \label{tab:my_label}
% \end{table}
% %==============================================================

% % As shown in the table,
% Fixed-maximum and fixed-minimum strategies outperform the balanced strategy.
% This is because the center of BCQ's quantization level explores more diverse values in fixed minimum and fixed maximum strategies than balanced strategy, resulting in better quantization levels.
% Therefore, we recommend evaluating fixed minimum and fixed maximum strategies and using the better one according to the experimental settings.

\subsection{Algorithm of General Alternating Update}
\label{sec:gau}
% ============ General alternating update ==================
%===============================================
%       Algorithm: Unified Initialization
%===============================================
% ====== Algorithmic =====
% \begin{adjustbox}{width=0.8\textwidth,center} % Set the desired width
% \begin{varwidth}{\linewidth} % varwidth environment starts here
\begin{algorithm}[t!]
    \caption{General alternating update}
    \label{alg:alt}
    \begin{algorithmic}[1]
        \item[\textbf{Input:}] % {\bfseries Input:}
        A weight group $\vw$ of size $g$. Numbers $G$ and $T$ of iterations for grid search and alternating update, respectively
        \item[\textbf{Output:}] % {\bfseries Output:}
        % Updated quantization parameters in $\Theta_B$
        Updated $\valpha$ and $z_B$
        % \STATE $w_m \gets min(min(\vw),0)$, $w_M\gets max(max(\vw),0)$
        \STATE Greedily initialize $\valpha$ and $\mC$
        \\ \hfill $\triangleright$ Equation 4 in~\citet{xu}
        \FOR{$t$ in 1, 2, ..., $T$}
            \STATE $\valpha$ $\gets$ $(\mC^T\mC)^{-1}\mC^T(\vw-z_B \bm{1}_g)$
            \STATE $\mC$ $\gets$ $\arg\min_{\mC'}
            ||\vw - (\mC'\valpha + z_B \bm{1}_g)||_2^2$
            \STATE \textbf{if} {$G=\blue{1}$} \textbf{then}
            \\ \hfill $\triangleright$ Update $z_B$ only if not grid search
            \STATE $\quad z_B$ $\gets$ $\sum_{i=1}^{g} (w_i - \mC_{i,:}\valpha)/g$
            \STATE \textbf{end if}
        \ENDFOR
    \end{algorithmic}
\end{algorithm}
\vspace{2mm}
% \end{varwidth} % varwidth environment ends here
% \end{adjustbox}
% \end{minipage}
% \end{figure}

% ======================================================

% =============== general alternating update =======================
Algorithm~\ref{alg:alt} details the general alternating update process employed in line 6 of Algorithm~\ref{alg:ui} which is an improved version of alternating update in \citet{xu} to find proper $z_B$.
% We improve the alternating update algorithm in $\citet{xu}$ to find proper $z_B$.
We initialize $\valpha$ and $\mC$ using a greedy initialization strategy (Equation (4) in \citet{xu}), where we determine $\valpha$ and $\mC$ for each bit-width one by one (line 1).
Then, we iteratively minimize the quantization error $e=||\vw - (\mC'\valpha + z_B\bm{1}_g)||_2^2$ for $T$ iterations to find the optimal quantization parameters (lines 2-8).
In each iteration, we optimize for $\valpha$, $\mC$, and $z_B$ sequentially while keeping the others fixed.
For $\valpha$ and $z_B$, we find the values where the derivatives $\frac{\partial e}{\partial \valpha}$ and $\frac{\partial e}{\partial z_B}$ equal to zero, respectively, where $\mC_{i,:}$ represents the $i$th row of $\mC$ (lines 3 and 6).
For $\mC$, we find the optimal binary codes for each weight by comparing errors obtained using all possible binary codes (line 4).
After completing these iterations, we obtain the optimized values for $\valpha$ and $z_B$.
We do not update $z_B$ when $G$ is greater than $1$ since diverse values for $z_B$ are explored during the iterative grid search process (line 5).

% =============== general alternating update =======================

\subsection{Algorithm of Local and Periodic Mapping}
\label{alg:lpmapping}
%================ Local & periodic mapping ====================

%===============================================
%       Algorithm: Local & periodic mapping
%===============================================
% \begin{algorithm}[t]
%     \caption{\blue{Direct-compute mapping ($\dgmap$)}}
%     \label{alg:dgmap}
%     \begin{algorithmic}[1]
%         \STATE {\bfseries Input:}
%         A weight group $\vw$ of size $g$, and BCQ's quantization parameter $\Theta_B=\{\valpha, z_B\}$
%         \STATE {\bfseries Output:}
%         A mapped weight group $\widetilde{\vw}$ and their indices $\vd$
%         % Get quantization levels
%         \STATE
%         $\vq$ $\gets$ $vec(\{z_B \pm \sum_{\alpha \in \valpha} \alpha\})$
%         \hfill $\triangleright$ Get quantization levels
%         \FOR {i in 1, 2, ... , g}
%         \STATE $d_i$ $\gets$ $\arg\min_{q\in \vq} |\widetilde{w}_i-q|$
%         \hfill $\triangleright$ Find the closest $q$
%         \STATE $\widetilde{w}_i$ $\gets$ $\vq_{d_i}$ for all $i$
%         \ENDFOR
%         % Find the minimum quantization
%         % The mapped
%         % Quantize
%     \end{algorithmic}
% \end{algorithm}
% % \vspace{-7mm}

%===============================================
%       Algorithm: Local & periodic mapping
%===============================================
\begin{algorithm}[t!]
    \caption{Local and periodic mapping ($\lpgmap$)}
    \label{alg:lpgmap}
    \begin{algorithmic}[1]
        \item[\textbf{Input:}] %{\bfseries Input:}
        A transformed weight group $\bar{\vw}$ of size $g$, iteration step $s$, a remapping period $p$, BCQ's quantization parameter $\Theta_B=\{\valpha, z_B\}$, and
        mapping indices $\vd$ and mapped weights $\widetilde{\vw}$ in the previous step
        \item[\textbf{Output:}] %{\bfseries Output:}
        Updated mapping indices $\vd$ and mapped weights $\widetilde{\vw}$
        % \IF{$(s \mod p)=0$}
        \STATE \textbf{if} $s \equiv 0 \pmod p$ \textbf{then} \hfill \(\triangleright\) Periodic mapping

        % \hspace{\fill}\(\triangleright\) Periodic mapping
        % \STATE \hfill $\triangleright$ Periodic mapping
        \STATE
        $\quad\vq$ $\gets$ compute-quantization-levels($\valpha$, $z_B$)
        % $vec(\{z_B \pm \sum_{\alpha \in \valpha} \alpha\})$
        % \FOR {i in 1, 2, ... , g}
        \STATE $\quad$\textbf{for} $i$ in $\{1, 2, ..., g\}$ \textbf{do}
            % \STATE $\mathcal{N}$ $\gets$ find-neighbors($\mathcal{Q}$, $d_i$)
            \STATE $\qquad l$, $r$ $\gets$ $max(d_i-1, 1)$, $min(d_i+1, 2^k)$
            \STATE $\qquad \mathcal{N}$ $\gets$ $\{l, d_i, r\}$ 
            % \blue{$\{ l, d_i, r\}}$
            \\ \hfill $\triangleright$ Neighbor quantization levels' indices 
            \STATE $\qquad d_i$ $\gets$ $\arg\min_{d'\in \mathcal{N}} |\bar{w}_i-q_{d'}|$
            \\ \hfill $\qquad \triangleright$ Local mapping
            \STATE $\qquad \widetilde{w}_i$ $\gets$ $q_{d_i}$
            % \STATE $\qquad q_{i'}$ $\gets$ $\arg\min_{q\in \mathcal{N}} |\bar{w}_i-q|$
            % \\ \hfill $\qquad \triangleright$ Local mapping
            % \STATE $\qquad \widetilde{w}_i$ $\gets$ $q_{i'}$
            % \hfill $\qquad \triangleright$ Update $\widetilde{\vw}$
        \STATE $\quad$\textbf{end for}
        % \ENDFOR
        % \ENDIF
        \STATE \textbf{end if}
    \end{algorithmic}
\end{algorithm}
    \vspace{2mm}

%==============================================================

Algorithm~\ref{alg:lpgmap} describes the overall process of local and periodic mapping which swiftly updates the mapping between weights and quantization levels during optimization.
In the beginning, we initialize the mapping by calculating distances to all quantization levels.
Then, we begin local and periodic mapping using the resulting mapping $\vd$ and mapped weights $\widetilde{\vw}$ where $\vd$ contains the indices of quantization levels that the weights are mapped.
We periodically update the mappings between weights and quantization levels in every $p$ optimization step (line 1).
We compute quantization levels using $\valpha$ and $z_B$, and store them in $\vq\in\sR^{2^k}$ (line 2).
Then, we find the closest quantization level among its previously mapped level and its neighboring levels (lines 4-6).
We also update the mapped weight value to use them for the successive steps before the next update (line 7).
Local and periodic mapping is efficient, updating 1/$p$ times less often than non-periodic algorithms and using only 3 among $2^k$ quantization levels for computing the distance between weights.
% Local and periodic mapping is efficient since it updates 1/$p$ times less frequently than non-periodic algorithms and computes the distance of weights with only 3 out of a total of $2^k$ quantization levels.

% \blue{
% Composition Theorem is an essential part of \method enables us to exploit flexible mapping without any memory and latency overhead compared to the conventional BCQ algorithms.
% We first provide a proof of Composition Theorem and perform an in-depth analysis of its effect.

\subsection{Proof of Theorem~\ref{thm:unification}}
\label{app:proof}
\begin{proof}
    By the definitions of $\mathcal{R}_B(\mC;\Theta_B)$ and $\mathcal{D}_R(\widetilde{\vw};\Theta_R)$, we reduce $\mathcal{D}_R(R_B(\mC;\Theta_B);\Theta_R)$ as follows.
    \begin{equation*}
    \begin{split}
        \mathcal{D}_R(\mathcal{R}_B&(\mC;\Theta_B);\Theta_R) \\
        &= \Delta((\mC \valpha + z_{B}\bm{1}_g) - z_{U}\bm{1}_g)\\
        &= \mC (\Delta\valpha) + \Delta (z_{B} - z_{U})\bm{1}_g.
    \end{split}
    \end{equation*}
    Thus, $\valpha^* = \Delta \valpha$, $z_{B}^* = \Delta (z_{B} - z_{U})$,
    and $R_B(\mC;\Theta^*_B) = \mC \valpha^* + z^*_{B}\bm{1}_g $
    where
    $\Theta_B^*=\{\valpha^*, z_{B}^*\}$.
\end{proof}

\subsection{Extensibility of \method}
\label{app:sec:extension}
In the main text, we employ \methodf which unifies FlexRound~\cite{flexround} and \alternating, the best-performing UQ and BCQ methods, respectively. %, for explanation.
We cover the unification between other quantization methods to explain the extensibility of \method.
% using three representative UQ methods: AWQ~\cite{awq}, GPTQ~\cite{optq}, and OmniQuant~\cite{omni}.
The main idea of \method is to utilize the unified quantization process with quantization parameters $\Theta_I=\Theta_U \cup \Theta_B$ as follows:
\begin{equation*}
    Q_I(\vw;\Theta_I) = \udetrans(\gmap(\utrans(\vw;\Theta_U);\Theta_B);\Theta_U).
    \vspace{-1mm}
\end{equation*}
Considering the lack of accurate BCQ methods, we focus on replacing $\utrans$ and $\udetrans$ with method-specific ones to unify UQ methods into the BCQ scheme.

AWQ~\cite{awq} reduces output reconstruction error by scaling weights based on their importance before quantization.
% AWQ~\cite{awq} scales the weights before quantization according to their importance in order to reduce the output reconstruction error.
This is accomplished by using $\trans_A$ and $\detrans_A$ defined as follows:
\begin{align}
\trans_A(\vw; \Theta_A) &= (\vw\odot \vs_A)/ \Delta_A  + z_A\bm{1}_g, \nonumber\\
\detrans_A(\widetilde{\vw}; \Theta_A) &= \Delta_A(\widetilde{\vw} - z_{O}\bm{1}_g)\oslash \vs_A, \nonumber
\end{align}
where $\vs_A\in\sR^g$ is a column-wise scale factor to reflect the importance of each column of the weight matrix where the weight group $\vw$ is located.
$\Delta_A\in\sR$ and $z_A\in\sR$ represent the new scale factor and zero-point after applying $\vs_A$, respectively.
$\Theta_A=\{\Delta_A, z_A, \vs_A\}$ is a set of quantization parameters for AWQ.
$\odot$ and $\oslash$ represent element-wise multiplication and division, respectively.
%
% The main idea of \method is to use unified quantization process in Section~\ref{subsec:uni}.
% AWQ~\cite{awq}, GPTQ~\cite{optq}, and OmniQuant~\cite{omni} are the representative UQ methods for LLMs.
% AWQ's main idea is to scale the weights before quantization according to the magnitude of their corresponding activations and GPTQ's main idea is to adjust other weights to compensate for quantization errors.
% Both methods' main ideas are able to be unified into Alpha-only by scaling weights before quantization using Alpha-only and substituting the block-wise output reconstruction process into GPTQ's error compensation process.

OmniQuant~\cite{omni} parameterizes the clipping range with parameters $\beta$ and $\gamma$ to find the best one through optimization.
This is accomplished by using $\otrans$ and $\odetrans$ as follows:
\begin{align}
\otrans(\vw; \Theta_O) &= \vw/ \Delta_O  + z_{O}\bm{1}_g, \nonumber\\
\odetrans(\widetilde{\vw}; \Theta_O) &= \Delta_O(\widetilde{\vw} - z_{O}\bm{1}_g), \nonumber
\end{align}
where $\Delta_O=(\gamma w_M - \beta w_m)/(2^k-1)$ and $z_{O}=-\lfloor \beta w_m /\Delta_O\rceil$ are the parameterized scale factor and zero-point, respectively.
$w_m$ and $w_M$ are the minimum and maximum weights in a weight group $\vw$.
$\Theta_O = \{\beta, \gamma\}$ is the set of quantization parameters of OmniQuant.

GPTQ~\cite{optq} is widely used in a UQ scheme, but it is a general error compensation technique which covers even pruning~\cite{sparsegpt}.
Thus, GPTQ is able to be unified into a BCQ scheme by applying GPTQ's error compensation strategy to BCQ methods.
%simply substituting BCQ's error compensation strategy into GPTQ's strategy.

In summary, it is able to unify diverse UQ methods into a BCQ scheme following \method's framework in theory.
Extending the coverage of unification to other quantization schemes beyond UQ and BCQ, and validating with experiments is one of our promising future works.

\section{Additional Analyses}
\label{app:analysis}
% \green{
We analyze the memory usage, quantization time, hyperparameter sensitivity, and the effect of sample dataset size on \methodf.
% }
We also validate that \methodf effectively leverages the optimization technique in FlexRound.
% 이번 섹션에서는 \method의 메모리 사용량, running time, 그리고 flexible mapping 활용 정도에 대해 분석한다.
%============================= Anaylsis =====================================
\begin{figure*}[t]
    \centering
    \includegraphics[width=0.95\textwidth]{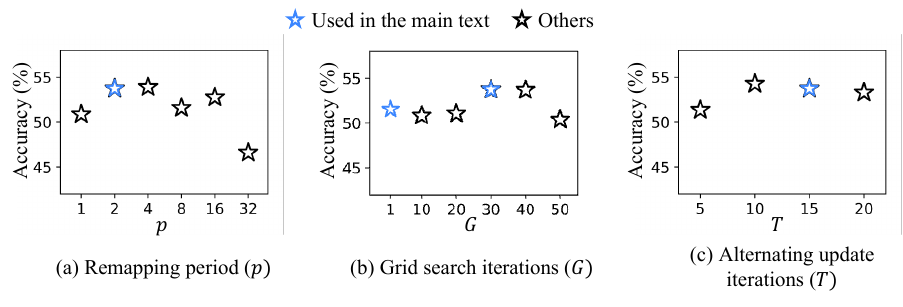}
    \vspace{-4mm}
    \caption{
    Change of average accuracy on 0-shot and 5-shot MMLU benchmarks with regard to the change of remapping period $p$, grid search iterations $G$, and alternating update iterations $T$.
    Blue stars represent the hyperparameters used in the main text and black stars represent the others.}
    \label{fig:A_sens}
\end{figure*}
%============================= Anaylsis =====================================

\subsection{Memory Usage of \methodf}
We analyze the memory usage of quantized models using \methodf.
As summarized in Table~\ref{tab:schemes}, \methodf requires the same amount of memory at inference time as existing BCQ methods~\cite{xu,alphatuning} since we convert the inference process of the unified quantization process into a single BCQ's inference process using the unification theorem.
When we quantize a weight group $\vw\in\sR^{g}$ of size $g$ into $k$ bits,
we need to store a binary code matrix $\mC\in\{-1,+1\}^{g\times k}$, BCQ's scale factors $\valpha\in\sR^{k}$, and BCQ's shifting factor $z_B\in\sR$.
Each binary code requires 1 bit and each real number requires 16 bits for saving, thus we require $gk + 16(k+1)$ bits in total.
The memory overhead due to the quantization parameters is $16(k+1)/g$ bits per weight, and if we assume channel-wise quantization where $g$ is larger than 4000 for LLMs~\cite{mistral,llama3}, the memory overhead is negligible.

\subsection{Quantization Time of \methodf}
We compare the running time for quantizing Llama-3 8B into 3 bits and the accuracy of the quantized model, using \methodf and FlexRound to evaluate the efficiency of \methodf.
We compare them across different epoch settings since the quantization time varies depending on the number of epochs.
We do not include the time for unified initialization in \methodf since it is performed as a preprocessing before quantization;
the initialized quantization parameters are reused in different hyperparameter settings when the alternating update iteration $T$ and the grid search iteration $G$ are not changed.
% In the case of \methodf, a unified initialization process is performed before quantization, which is a preprocessing step that only relies on the weights and is therefore not included in the time measurement.

% ======================= Quant Times ==============================
\setlength{\tabcolsep}{3.5pt}
\begin{table}[h]
    \centering
    \caption{Comparison of average accuracies (\%) on 0-shot and 5-shot MMLU benchmarks, and quantization time (s) when quantizing Llama-3 8B into 3bit using \methodf and FlexRound, respectively.
    Bold and underlined texts represent the best and second-best results in each method, respectively.}
    \label{tab:A_time}
    \begin{tabular}{ccccc}
    \toprule
     &
      \multicolumn{2}{c}{\textbf{\methodf}} &
      \multicolumn{2}{c}{\textbf{FlexRound}} \\
    \multirow{-2}{*}{\textbf{Epochs}} &
      \textbf{Accuracy} &
      \textbf{Time} &
      \textbf{Accuracy} &
      \textbf{Time} \\
    \midrule
    5  & 47.97 & 5,089  & 46.71 & 4,239  \\
    10 & 51.04 & 10,022 & 47.97 & 8,350  \\
    15 & 52.08 & 15,905 & 51.08 & 12,405 \\
    20 & 53.46 & 20,403 & \textbf{51.19} & 16,868 \\
    25 & \textbf{54.23} & 25,884 & 48.19 & 21,970 \\
    30 & \underline{53.86} & 28,521 & \underline{51.09} & 25,139 \\
    \bottomrule
    \end{tabular}
\end{table}
\setlength{\tabcolsep}{6pt}
% ======================= Quant Times ==============================

Table~\ref{tab:A_time} shows that \methodf requires about 20\% longer time for quantization than FlexRound, but it achieves higher accuracy than FlexRound across all epoch settings.
Especially, \methodf requires a shorter time of 15,905 seconds to outperform the highest accuracy of FlexRound which requires 16,868 seconds, demonstrating its efficiency.
% Especially, while FlexRound achieves its highest accuracy of 51.19\% in 16,868 seconds, UniQuan reaches a higher accuracy of 52.08\% in a shorter time of 15,905 seconds, demonstrating its efficiency.

The running time for the unified initialization process depends on $G$ and $T$.
When $G=30$ and $T=15$, as used in this experiment, it takes approximately 6,700 seconds and \methodf takes longer than FlexRound if we include this.
However, considering that the quantization time only needs to be performed once for deployment, the initialization results are reusable, and FlexRound cannot reach the high performance of \methodf even though we invest more time. %, \methodf is sufficiently useful for quantizing LLMs.

% %%%%%%%%%%%% P and Tau
\subsection{Sensitivity Analysis on $p$, $G$, and $T$}
We report the change in the performance of quantized models according to the variation of \methodf's hyperparameters $G$, $T$, and $p$ to promote reproducibility.
We report the average accuracy of 0-shot and 5-shot MMLU benchmarks of the 3-bit quantized Llama-3 8B models.

\textbf{Remapping period ($p$).}
Figure~\ref{fig:A_sens}(a) shows the change in the quantized models' accuracy with regard to the change of remapping period $p$.
We use $p\in\{1,2,4,8,16,32\}$ where a higher $p$ represents the sparse update of mapping between weights and quantization levels.
As a result, we find that the quantized model achieves the highest accuracy when $p\in\{2,4\}$, outperforming the case of $p=1$ where we update the mapping in every step.
This result indicates that periodic mapping not only improves the efficiency of \methodf but also improves the accuracy of the quantized models.
We recommend using $p=\{2, 4\}$ and we use only $p=2$ in this paper since they exhibit similar accuracies.

\textbf{Grid search iterations ($G$).}
Figure~\ref{fig:A_sens}(b) shows the change in the quantized models' accuracy with regard to the change of grid search iterations $G$.
We use $G\in\{{1}, 10, 20, 30, 40, 50\}$ where a higher $G$ represents the exhaustive search for unified initialization.
We also include {$G=1$} which indicates the case that we find the center of BCQ's quantization levels using the general alternating update in Algorithm~\ref{alg:alt}, unlike the other cases that use grid search.
As a result, we find that $G\in\{{1}, 30, 40\}$ shows the highest accuracy and we recommend using $G=\{{1}, 30\}$ to efficiently explore two different strategies for searching the BCQ's center.

\textbf{Alternating update iterations ($T$).}
Figure~\ref{fig:A_sens}(c) shows the change in the quantized models' accuracy with regard to the change of alternating update iterations $G$.
We use $T\in\{5, 10, 15, 20\}$ where a higher $T$ represents the more iterations for alternating update.
As a result, we find that \methodf achieves high accuracy when $T$ is equal to or higher than 10.
Thus, we recommend using $T=\{10, 15\}$ which shows the highest accuracy.

\subsection{Sensitivity on the Clipping Strategies}\label{subsec:A_clip_exp}
We compare the performance of three clipping strategies in unified initialization.
Table~\ref{tab:a_clipping} summarizes the comparison of the performance of Llama-3 8B models quantized using the three clipping strategies.

%================== Sample datasets ========================
\begin{table}[t!]
    \centering
    \caption{The average accuracies on 0-shot and 5-shot MMLU benchmarks of 3-bit Llama-3 8B models with various clipping strategies.} \label{tab:a_clipping}
    \begin{tabular}{cc} \toprule
        \textbf{Strategy} & \textbf{Accuracy} \\ \midrule
        % 0-shot   & 42.57 & 47.23 & 50.38 & 50.26 \\
        % 5-shot   & 52.88 & 54.19 & 56.53 & 57.45  \\ \midrule
        Fixed-minimum & 53.46 \\ % \bottomrule
        Fixed-maximum & 54.34 \\ % \bottomrule
        Balanced & 53.21 \\ \bottomrule
    \end{tabular}
    % \label{tab:my_label}
\end{table}
%==============================================================

% As shown in the table,
Fixed-maximum and Fixed-minimum strategies outperform the Balanced strategy.
This is because the center of BCQ's quantization level explores more diverse values in Fixed minimum and Fixed maximum strategies than Balanced strategy, resulting in better quantization levels.
Therefore, we recommend evaluating Fixed minimum and Fixed maximum strategies and using the better one according to the experimental settings.

\subsection{Sensitivity on the Size of Sample Dataset}
% Calibration dataset의 종류에 대한 실험은 중요하며,
To illustrate the effect of sample dataset size on the performance of \methodf, we quantize the Llama-3 8B model to 3 bits using sample datasets of varying sizes.
We evaluate the performance of quantized models on 0-shot and 5-shot MMLU benchmarks.
We use sample datasets ranging in size from 32 to 256, where 128 is the size used in the main text.
% Each sample dataset comprises 2,048 tokens.
We summarize the result in Table~\ref{tab:a_sizes}.

%================== Sample datasets ========================
\begin{table}[t!]
    \centering
    \caption{The average accuracies on 0-shot and 5-shot MMLU benchmarks of 3-bit Llama-3 8B models on various sizes of sample datasets.} \label{tab:a_sizes}
    \begin{tabular}{ccccc} \toprule
        \textbf{Sample Size} & \textbf{32} & \textbf{64} & \textbf{128} & \textbf{256} \\ \midrule
        % 0-shot   & 42.57 & 47.23 & 50.38 & 50.26 \\
        % 5-shot   & 52.88 & 54.19 & 56.53 & 57.45  \\ \midrule
        Accuracy & 47.72 & 50.71 & 53.46 & 53.86  \\ \bottomrule
    \end{tabular}
    % \label{tab:my_label}
\end{table}
%==============================================================
%
Experimental results show that the accuracy of the quantized models increases as the sample size grows.
This is attributed to the fact a large number of data points promote the distillation of the general knowledge in the unquantized model to the quantized model.
Therefore, \methodf achieves higher performance than that reported in the main text by providing enough data points.
%==============================================================

% \newpage
\subsection{Flexible Mappings in \methodf} \label{subsec:app:flex}
\methodf unifies FlexRound~\cite{flexround} and \alternating.
We perform an in-depth analysis to verify that \methodf effectively leverages the main optimization technique of FlexRound.
The main idea of FlexRound is ``flexible mapping" which introduces additional scale factors $\vs$ and $s_r$ to make weights explore diverse quantization levels, and selects the best one.
We compare the amount of flexible mapping occurred in \methodf and FlexRound for validation.
We use a 3-bit quantized Llama-3 8B model for analysis.

\textbf{Case study.} We analyze the amount of flexible mappings in a weight group of quantized models and visualize the results.
Figure~\ref{fig:flexmap} represents the index difference of mapped quantization levels for each weight before and after applying scale factors $\vs$ and $s_r$ when quantizing Llama-3 8B models into 3-bit using \methodf and FlexRound.
An index difference equal to or larger than 1 represents that the weight experiences the flexible mapping through $\vs$ and $s_r$.
Note that plenty of flexible mappings occur in the quantization process of \methodf similarly in FlexRound.

\begin{figure}[t!]
    \centering
    \includegraphics[width=0.49\textwidth]{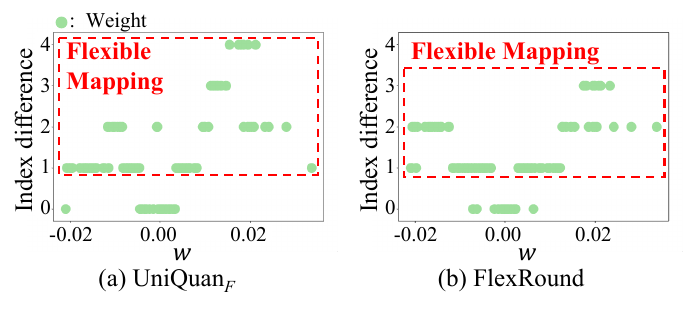}
    \vspace{-4mm}
    \renewcommand{\arraystretch}{0.97}
    \caption{
    Index difference of mapped quantization levels per weight before and after applying $\vs$ and $s_r$.
    % Index difference of mapped quantization levels for each weight before and after applying FlexRound's scale factors $\vs$ and $s_r$.
    Flexible mappings occur sufficiently in \methodf compared to FlexRound.
    % A sufficient amount of flexible mappings occur in \methodf compared to FlexRound.
    }
    \label{fig:flexmap}
\end{figure}

\setlength{\tabcolsep}{3.5pt}
\begin{table}[t!] \centering
\caption{{Proportion of weights exhibiting flexible mapping across the entire model. Each column indicates the index difference of the mapped quantization level resulting from flexible mapping.}}\label{tab:a_fmM}
\begin{tabular}{@{}ccccc@{}}
\toprule
\multirow{2}[3]{*}{\textbf{Method}}& \multicolumn{4}{c}{\textbf{Index Difference}}\\
\cmidrule(lr){2-5}
 & \textbf{0} & \textbf{1} & \textbf{2} & \textbf{$>$2} \\ \midrule
FlexRound       & 95.4132    & 4.5862     & 5.00e-04   & 1.30e-05    \\
\methodf         & 96.0395    & 3.8619     & 9.85e-02   & 7.62e-05    \\ \bottomrule
\end{tabular}
\end{table}
\setlength{\tabcolsep}{6pt}

\textbf{Global pattern.}
We analyze the proportion of flexible mapping applied across all model weights to examine that flexible mappings occur globally.
Table~\ref{tab:a_fmM} illustrates the proportion of the flexibly mapped weights in the entire model.
The column names in the tables indicate the amount of changes in indices of weights' mapped quantization levels resulting from flexible mapping.
Across the entire model, \methodf demonstrates a similar level of flexible mapping as FlexRound.
Therefore, \methodf effectively utilizes flexible mapping across the entire model as we intended.

\section{Use of AI Assistant}
We use ChatGPT\footnote{\url{https://chatgpt.com/}} and Gemini\footnote{\url{https://gemini.google.com/}} only for grammar checking and sentence re-wording purposes.
We do not use them for research purposes such as developing our main ideas or analyzing our experimental results.

\section{Potential Risks}
In this paper, we propose \methodf, a quantization method for large language models (LLMs), and there is a potential risk of losing the models' knowledge during quantization.
In experiments, we rigorously validate the amount of knowledge loss in both general and task-specific aspects to verify whether the risk occurs.
As a result, we demonstrate that \methodf results in significantly less knowledge loss than other methods, confirming its low risk.
% As a result, we demonstrate that the models' knowledge loss is significantly less when using \methodf than when using other methods, confirming the low risk of \methodf.
% 이 논문은 Large language model를 양자화하는 것을 다루고 있으며, 학습된 모델을 양자화하는 과정에서 모델의 일부 지식이 사라질 수 있다는 잠재적인 리스크가 있다. 본 논문에서는 이러한 리스크를 발생하는 지 여부를 검증하기 위해 압축된 모델의 일반적인 지식에 대한 검증 및 수학적 추론 능력까지 잘 보존됨을 검증함으로써 이러한 위험이 크게 발생하지 않았음을 확인한다.

% \section{Example Appendix}
% \label{sec:appendix}

% This is an appendix.

\end{document}